\definecolor{myblue}{HTML}{1f77b4}
\definecolor{myorange}{HTML}{ff7f0e}
\newlength\height 
\newlength\width
\newcommand*{\VEC}[1]  {\ensuremath{\boldsymbol{#1}}}
\newcommand*{\MAT}[1]  {\ensuremath{\boldsymbol{#1}}}
\newcommand*{\nfeatures}{\ensuremath{p}}
\newcommand*{\nsamples}{\ensuremath{n}}
\newcommand*{\nmatrices}{\ensuremath{K}}
\newcommand*{\nclasses}{\ensuremath{Z}}
\newcommand*{\dof}{\ensuremath{\nu}}
\newcommand*{\ambient}{\ensuremath{\mathcal{E}}}
\newcommand*{\symspace}{\ensuremath{\mathcal{S}_{\nfeatures}}}
\newcommand*{\SPDman}{\ensuremath{\mathcal{S}^{++}_{\nfeatures}}}
\newcommand*{\SPDtangent}[1]{\ensuremath{T_{#1}\mathcal{S}^{++}_{\nfeatures}}}
\newcommand*{\metric}[3]{\ensuremath{\langle#2,#3\rangle_{#1}}}
\newcommand*{\LC}[2]{\ensuremath{\nabla_{#2}\, #1}}
\newcommand*{\eye}{\ensuremath{\MAT{I}_\nfeatures}}
\newcommand*{\covcenter}{\ensuremath{\MAT{G}}}
\newcommand*{\cov}{\ensuremath{\MAT{S}}}
\newcommand*{\data}{\ensuremath{\VEC{x}}}
\newcommand*{\dataMat}{\ensuremath{\MAT{X}}}
\newcommand*{\tangent}{\ensuremath{\MAT{\xi}}}
\newcommand*{\tangentBis}{\ensuremath{\MAT{\eta}}}
\newcommand*{\estimator}{\ensuremath{\MAT{\widehat{G}}}}
\newcommand*{\Normal}[1]{\ensuremath{\mathcal{N}(\VEC{0},#1)}}
\newcommand*{\Wishart}[1]{\ensuremath{\mathcal{W}(#1,\nsamples)}}
\newcommand*{\densitygenerator}{\ensuremath{h}}
\newcommand*{\EWishart}[1]{\ensuremath{\mathcal{EW}(#1,\nsamples,\densitygenerator)}}
\newcommand*{\tWishart}[1]{\ensuremath{t\textup{-}\mathcal{W}(#1,\nsamples,\dof)}}
\newtheorem{definition}{Definition}
\newtheorem{proposition}{Proposition}
\newtheorem{assumption}{Assumption}
\theoremstyle{definition}
\newtheorem{remark}{Remark}
\DeclareMathOperator{\tr}{tr}
\DeclareMathOperator{\diff}{d}
\DeclareMathOperator{\vvec}{vec}
\DeclareMathOperator{\symm}{sym}
\DeclareMathOperator{\expm}{expm}
\DeclareMathOperator{\logm}{logm}
\DeclareMathOperator{\grad}{grad}
\newcommand{\argmin}{\operatornamewithlimits{argmin}}
\newcommand{\argmax}{\operatornamewithlimits{argmax}}
\begin{document}

\begin{frontmatter}
    \title{Elliptical Wishart distributions: information geometry, maximum likelihood estimator, performance analysis and statistical learning}

    \author[label]{Imen Ayadi}
    \author[label]{Florent Bouchard}
    \author[label]{Frédéric Pascal}
    
    \affiliation[label]{
        organization={Université Paris-Saclay, CNRS, CentraleSupélec, laboratoire des signaux et systèmes},
        addressline={3 rue Joliot-Curie},
        city={Gif-sur-Yvette},
        postcode={91190},
        country={France}
    }

    \begin{abstract}
        This paper deals with Elliptical Wishart distributions -- which generalize the Wishart distribution -- in the context of signal processing and machine learning.
        Two algorithms to compute the maximum likelihood estimator (MLE) are proposed: a fixed point algorithm and a Riemannian optimization method based on the derived information geometry of Elliptical Wishart distributions.
        The existence and uniqueness of the MLE are characterized as well as the convergence of both estimation algorithms.
        Statistical properties of the MLE are also investigated such as consistency, asymptotic normality and an intrinsic version of Fisher efficiency.
        On the statistical learning side, novel classification and clustering methods are designed.
        For the $t$-Wishart distribution, the performance of the MLE and statistical learning algorithms are evaluated on both simulated and real EEG and hyperspectral data, showcasing the interest of our proposed methods.
        
    \end{abstract}
    
    \begin{keyword}
        Elliptical Wishart distributions, Covariance, Estimation, Statistical learning, Information geometry, Riemannian optimization
    \end{keyword}
\end{frontmatter}

\section{Introduction}

Covariance matrices are tremendous in statistical signal processing and machine learning.
Indeed, they have shown very useful in various applications such as image processing~\cite{tuzel2008pedestrian}, electroencephalography (EEG)~\cite{barachant2011multiclass}, radar~\cite{pascal2008performance}, \textit{etc}.
In particular, they are exploited in the context of direction of arrival~\cite{mahot2013asymptotic}, change detection~\cite{prendes2015change,mian2024online}, source separation~\cite{pham2001blind,bouchard2018riemannian}, principal component analysis~\cite{jolliffe2016principal,collas2021probabilistic}, graph learning~\cite{friedman2008sparse,hippert2023learning}, \textit{etc}.
Covariance matrices have also been leveraged for classification and clustering.
In this context, statistics over covariance matrices are needed.
When dealing with covariances, the notorious Fréchet mean~\cite{bhatia2009positive} have extensively been employed~\cite{tuzel2008pedestrian,barachant2011multiclass,bouchard2024random}.
The Wishart distribution have also been exploited in~\cite{lee1999unsupervised}.

As of now, a large family of distributions over covariance matrices that appear quite appealing has been neglected: Elliptical Wishart distributions~\cite{teng1989generalized}.
These generalize the Wishart distribution the same way multivariate elliptical distributions generalize the Normal one%
\footnote{
    See, \textit{e.g.}, \cite{ollila2012complex} for a full review of multivariate elliptical distributions.
}.
In particular, as for the multivariate case, one can expect Elliptical Wishart distributions to yield robustness.
However, this robustness to noise and outliers is at the covariance level%
\footnote{
    In practical cases, this suits well mislabeling or large portions of corrupted data.
}.
Very little has been done to study Elliptical Wishart distributions.
From a statistical point of view, the latest advances to characterize them can be found in~\cite{ayadi2024elliptical}.
From a signal processing and machine learning perspective, our preliminary papers~\cite{ayadi2023elliptical,ayadi2023t} derive the information geometry, a Riemannian based algorithm to compute the maximum likelihood estimator, and a classification method%
\footnote{
    Their contributions are also included in the present paper for completeness.
}.
Notice that, while the considered model differs, the algorithm in~\cite{taylor2017generalization} provides a way to compute a center of covariance matrices that is related to the Elliptical Wishart maximum likelihood estimator%
\footnote{
    As a matter of fact, it can somehow be viewed as its Tyler estimator counterpart.
}.

The objectives of this paper are:
\textit{(i)} to provide practical tools to be able to exploit Elliptical Wishart distributions in signal processing and machine learning contexts;
\textit{(ii)} to theoretically study the properties of developed tools;
and \textit{(iii)} to validate them in practice.
Specifically, our contributions are:
\vspace*{-8pt}
\begin{itemize}[leftmargin=0pt, itemindent=*,itemsep=-4pt]
    \item Two algorithms to compute the maximum likelihood estimator: a fixed point algorithm and a Riemannian optimization method based on the information geometry of Elliptical Wishart distributions%
    \footnote{
        The proposed Riemannian algorithm is a little more sophisticated than the one in~\cite{ayadi2023elliptical}.
    }.
    The existence and uniqueness of the maximum likelihood estimator is characterized as well as the convergence of proposed algorithms.
    \item A theoretical performance analysis of the maximum likelihood estimator.
    Its expectation and variance are obtained.
    Asymptotic properties are also derived: consistency, asymptotic normality and an intrinsic version of Fisher efficiency.
    \item Statistical learning methods exploiting Elliptical Wishart distributions.
    A Bayesian classifier is designed by leveraging discriminant analysis.
    It generalizes the one from~\cite{ayadi2023t} to all Elliptical Wishart distributions.
    By exploiting $K$-means, the proposed Bayesian classifier is turned into a clustering method.
    \item An evaluation of the performance of the maximum likelihood estimator and statistical learning methods on both simulated and real EEG and hyperspectral data.
    It demonstrates the practical interest of Elliptical Wishart distributions.
\end{itemize}
\vspace*{-5pt}
Due to space limitations, all proofs are provided in Supplementary Materials.
To ensure reproducibility, the code related to this paper is available at \url{https://github.com/IA3005/Elliptical-Wishart-for-Signal-Processing.git}.

\section{Elliptical Wishart distributions}
Elliptical Wishart distributions form a large family of distributions over the manifold of symmetric positive definite matrices $\SPDman$~\cite{teng1989generalized,ayadi2024elliptical}.
They generalize the praised Wishart distribution.
In this section, we first properly define these distributions.
We then provide their information geometry, which will be used in the following to derive an algorithm for the MLE and study its properties.

\subsection{Statistical model}

Recall that the Wishart distribution basically corresponds to the distribution of (scaled) sample covariance matrices (SCM) of some i.i.d. random Gaussian vectors.
More specifically,  a random matrix $\cov\in\SPDman$ drawn from the Wishart distribution $\Wishart{\covcenter}$ with center $\covcenter\in\SPDman$ and $\nsamples$ degrees of freedom is $\cov=\dataMat\dataMat^\top$, where the columns $\{\data_i\}_{i=1}^\nsamples$ of $\dataMat\in\mathds{R}^{\nfeatures\times\nsamples}$ are $\nsamples$ i.i.d. random vectors drawn from the multivariate centered Normal distribution $\Normal{\covcenter}$.
In the Elliptical Wishart distributions extension, one still studies the distribution of (scaled) SCMs $\cov=\dataMat\dataMat^\top$.
However, $\dataMat\in\mathds{R}^{\nfeatures\times\nsamples}$ no longer has i.i.d. columns drawn from the multivariate Normal distribution.
Instead, $\dataMat$ is assumed to follow a so-called matrix-variate Elliptical distribution~\cite{teng1989generalized,ayadi2024elliptical}.
A formal definition of Elliptical Wishart distributions is provided in Definition~\ref{def:EW}.

\begin{definition}[Elliptical Wishart distributions~\cite{teng1989generalized}]
\label{def:EW}
    The Elliptical Wishart distribution $\EWishart{\covcenter}$, with center $\covcenter\in\SPDman$, degrees of freedom $\nsamples$ (with $\nsamples>\nfeatures$) and density generator $h:\mathds{R}^+\to\mathds{R}$, is the distribution whose random matrices $\cov\in\SPDman$ admit the probability density function (pdf)
    \begin{equation*}
        f(\cov|\covcenter) = \frac{\pi^{\nsamples\nfeatures/2}}{\Gamma_{\nfeatures}(\nsamples/2)}
        \det(\covcenter)^{-\nsamples/2} \det(\cov)^{(n-p-1)/2}
        \densitygenerator(\tr(\covcenter^{-1}\cov)),
    \end{equation*}
    where $\det(\cdot)$ and $\tr(\cdot)$ denote the determinant and trace operators, and $\Gamma_{\nfeatures}(\cdot)$ is the multivariate Gamma function of dimension $\nfeatures$.
    To ensure that $\densitygenerator(\cdot)$ is a proper density generator, one must have $\frac{\pi^{\nsamples\nfeatures/2}}{\Gamma(\nsamples\nfeatures/2)}\int_0^{+\infty}\densitygenerator(t)t^{\frac{\nsamples\nfeatures}{2}-1}dt = 1$.
\end{definition}

Two examples of Elliptical Wishart distributions, the Wishart and $t$-Wishart distributions are presented in Table~\ref{tab:EW_ex}.
Notice that Elliptical Wishart distributions generalize the Wishart distribution the same way multivariate Elliptical distributions generalize the multivariate Normal one.
Indeed, in both case, the exponential function in the pdf is replaced by a more general density generator $\densitygenerator(\cdot)$.
Hence, Elliptical Wishart distributions also allow to introduce robustness to noise and outliers.
However, unlike the multivariate case, it is not at the level of the covariance itself but at the level of its own distribution.
Meaning that one still deals with SCMs but some of these can be considered very noisy or even outliers%
\footnote{
    For instance, mislabeled data in the context of machine learning.
}.
Another striking similarity with the multivariate case is that random matrices drawn from some Elliptical Wishart distribution also admit a stochastic representation~\cite{ayadi2024elliptical}.
Given $\cov\sim\EWishart{\covcenter}$, one has
\begin{equation}
    \cov = \mathcal{Q} \, \covcenter^{1/2}\MAT{U}\MAT{U}^\top\covcenter^{1/2},
    \label{eq:stoch_rep}
\end{equation}
where $\mathcal{Q}$ and $\MAT{U}$ are two independent random variables.
$\MAT{U}$ is uniformly distributed on the unit sphere $\mathcal{C}_{\nfeatures,\nsamples}=\{\MAT{U}\in\mathds{R}^{\nfeatures\times\nsamples}: \|\MAT{U}\|_2=1\}$.
$\mathcal{Q}$ is a non-negative random scalar with pdf $t\mapsto\frac{\pi^{\nsamples\nfeatures/2}}{\Gamma_{\nfeatures}(\nsamples/2)}\densitygenerator(t)t^{\nsamples\nfeatures/2-1}$.


To study the information geometry, obtain the maximum likelihood estimator, \textit{etc.}, it remains to define the negative log-likelihood.
Given $\nmatrices$ samples $\{\cov_k\}_{k=1}^\nmatrices$, it is, up to an additive constant,
\begin{equation}
    L(\covcenter) =
    \frac{\nsamples\nmatrices}{2} \log\det(\covcenter)
    - \sum_{k=1}^\nmatrices \log\,\densitygenerator(\tr(\covcenter^{-1}\cov_k)).
    \label{eq:log_lik}
\end{equation}

\subsection{Fisher information geometry}
\label{subsec:info_geo}

The parameter $\covcenter$ of Elliptical Wishart distributions $\EWishart{\covcenter}$ belongs to the manifold of SPD matrices $\SPDman$.
A specific Riemannian geometry on $\SPDman$ is intrinsically linked to Elliptical Wishart distributions: their so-called Fisher information geometry.
Notice that this geometry was obtained in our previously published paper~\cite{ayadi2023elliptical}.
Because it is a conference paper, the description of the geometry is quite short.
For completeness, we provide full details in the present paper.
However, their is no additional contribution because the resulting Fisher information metric yields a very well known geometry on $\SPDman$.

The (smooth) manifold of SPD matrices $\SPDman$ is open in the (Euclidean) space of symmetric matrices $\symspace$.
Hence, the tangent space at every point $\covcenter$ in $\SPDman$ can be identified to $\symspace$, \textit{i.e.}, $\SPDtangent{\covcenter}\simeq\symspace$.
To characterize the information geometry, one first needs to determine the Fisher information metric.
At $\covcenter\in\SPDman$, given tangent vectors $\tangent$ and $\tangentBis\in\symspace$, it is obtained through~\cite{smith2005covariance}
\begin{equation}
    \metric{\covcenter}{\tangent}{\tangentBis} = \mathds{E}[\diff^2 L(\covcenter)[\tangent,\tangentBis]],
\end{equation}
where $\mathds{E}[\cdot]$ denotes the expectation, $\diff^2$ is the second order directional derivative and the negative log-likelihood $L$ is defined in~\eqref{eq:log_lik}.
The Fisher information metric of Elliptical Wishart distributions is provided in Proposition~\ref{prop:fim}.

\begin{proposition}[Fisher information metric]
    \label{prop:fim}
    The Fisher information metric of Elliptical Wishart distributions is, for $\covcenter\in\SPDman$, $\tangent$ and $\tangentBis\in\symspace$,
    \begin{equation*}
        \metric{\covcenter}{\tangent}{\tangentBis} =
        \alpha \tr(\covcenter^{-1}\tangent\covcenter^{-1}\tangentBis)
        + \beta \tr(\covcenter^{-1}\tangent)\tr(\covcenter^{-1}\tangentBis),
    \end{equation*}
    where, given $u(t)=-2h'(t)/h(t)$,
    \begin{equation*}
        \alpha = \frac{\nsamples}2\left(1+\frac{\mathds{E}[\mathcal{Q}^2u'(\mathcal{Q})]}{\nsamples\nfeatures(\frac{\nsamples\nfeatures}2+1)}\right)
        \qquad\quad
        \textup{and}
        \qquad\quad
        \beta = \frac{\nsamples}2(\alpha-\frac{\nsamples}{2}).
    \end{equation*}
\end{proposition}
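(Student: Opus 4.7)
The plan is to evaluate the expected Hessian of the negative log-likelihood directly from \eqref{eq:log_lik} (specialised to one observation, since both the metric and \eqref{eq:log_lik} scale linearly with $\nmatrices$), and then push the remaining randomness through the stochastic representation \eqref{eq:stoch_rep}. All the structural content of the result then comes from combining the isotropic moments of $\MAT{U}$ with a one-line integration by parts in $\mathcal{Q}$.

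\paragraph{Step 1: two differentiations of $L$.}
Writing $a(\covcenter)=\tr(\covcenter^{-1}\cov)$ and using $\diff(\covcenter^{-1})[\tangent]=-\covcenter^{-1}\tangent\covcenter^{-1}$, I would first get
\begin{equation*}
\diff L(\covcenter)[\tangent]
= \tfrac{\nsamples}{2}\tr(\covcenter^{-1}\tangent)
- \tfrac{1}{2}\, u(a)\,\tr(\covcenter^{-1}\tangent\covcenter^{-1}\cov),
\end{equation*}
from the definition $u(t)=-2h'(t)/h(t)$. Differentiating once more in the direction $\tangentBis$, and collecting the two symmetric triple-trace terms with the help of cyclicity and symmetry of $\covcenter,\tangent,\tangentBis,\cov$, I obtain
\begin{equation*}
\diff^2 L(\covcenter)[\tangent,\tangentBis]
= -\tfrac{\nsamples}{2}\tr(\covcenter^{-1}\tangent\covcenter^{-1}\tangentBis)
+ \tfrac{1}{2}\,u'(a)\,\tr(\covcenter^{-1}\tangent\covcenter^{-1}\cov)\,\tr(\covcenter^{-1}\tangentBis\covcenter^{-1}\cov)
+ u(a)\,\tr(\covcenter^{-1}\tangent\covcenter^{-1}\tangentBis\covcenter^{-1}\cov).
\end{equation*}

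\paragraph{Step 2: reduce to moments of $\mathcal{Q}$ and $\MAT{U}$.}
Using \eqref{eq:stoch_rep} and setting $\widetilde{\tangent}=\covcenter^{-1/2}\tangent\covcenter^{-1/2}$, $\widetilde{\tangentBis}=\covcenter^{-1/2}\tangentBis\covcenter^{-1/2}$, I would check the key simplifications $a=\mathcal{Q}\,\|\MAT{U}\|_2^2=\mathcal{Q}$, $\tr(\covcenter^{-1}\tangent\covcenter^{-1}\cov)=\mathcal{Q}\,\tr(\MAT{U}^\top\widetilde{\tangent}\MAT{U})$, and $\tr(\covcenter^{-1}\tangent\covcenter^{-1}\tangentBis\covcenter^{-1}\cov)=\mathcal{Q}\,\tr(\MAT{U}^\top\widetilde{\tangent}\widetilde{\tangentBis}\MAT{U})$. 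Taking expectations and using that $\mathcal{Q}\perp\MAT{U}$ then splits everything into products of one scalar expectation in $\mathcal{Q}$ and one polynomial expectation in $\MAT{U}$.

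\paragraph{Step 3: sphere moments and integration by parts.}
Since $\vvec(\MAT{U})$ is uniform on the unit sphere of $\mathds{R}^{\nfeatures\nsamples}$, the standard isotropic fourth-order tensor formula gives, for symmetric $\MAT{A},\MAT{B}\in\mathds{R}^{\nfeatures\nsamples\times\nfeatures\nsamples}$,
\begin{equation*}
\mathds{E}\!\left[(\vvec(\MAT{U})^\top\MAT{A}\vvec(\MAT{U}))(\vvec(\MAT{U})^\top\MAT{B}\vvec(\MAT{U}))\right]
= \tfrac{1}{\nfeatures\nsamples(\nfeatures\nsamples+2)}\bigl(\tr(\MAT{A})\tr(\MAT{B})+2\tr(\MAT{A}\MAT{B})\bigr).
\end{equation*}
Applied with $\MAT{A}=\eye_\nsamples\otimes\widetilde{\tangent}$, $\MAT{B}=\eye_\nsamples\otimes\widetilde{\tangentBis}$, this yields $\mathds{E}[\tr(\MAT{U}^\top\widetilde{\tangent}\MAT{U})\tr(\MAT{U}^\top\widetilde{\tangentBis}\MAT{U})]=\tfrac{\nsamples}{\nfeatures(\nfeatures\nsamples+2)}\tr(\covcenter^{-1}\tangent)\tr(\covcenter^{-1}\tangentBis)+\tfrac{2}{\nfeatures(\nfeatures\nsamples+2)}\tr(\covcenter^{-1}\tangent\covcenter^{-1}\tangentBis)$, and $\mathds{E}[\tr(\MAT{U}^\top\widetilde{\tangent}\widetilde{\tangentBis}\MAT{U})]=\tfrac{1}{\nfeatures}\tr(\covcenter^{-1}\tangent\covcenter^{-1}\tangentBis)$ by the corresponding second-moment identity. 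For the $\mathcal{Q}$-part, the only non-trivial item is $\mathds{E}[\mathcal{Q}\,u(\mathcal{Q})]$, which, using the pdf of $\mathcal{Q}$ and one integration by parts (the boundary terms vanishing by integrability of $h$ and the normalising condition in Definition~\ref{def:EW}), gives $\mathds{E}[\mathcal{Q}\,u(\mathcal{Q})]=\nsamples\nfeatures$.

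\paragraph{Step 4: assemble.}
Substituting everything into $\mathds{E}[\diff^2 L]$ produces a linear combination of $\tr(\covcenter^{-1}\tangent\covcenter^{-1}\tangentBis)$ and $\tr(\covcenter^{-1}\tangent)\tr(\covcenter^{-1}\tangentBis)$. The coefficient of the first one collapses to $\tfrac{\nsamples}{2}+\tfrac{\mathds{E}[\mathcal{Q}^2 u'(\mathcal{Q})]}{\nfeatures(\nfeatures\nsamples+2)}=\alpha$, and the coefficient of the second one equals $\tfrac{\nsamples\,\mathds{E}[\mathcal{Q}^2 u'(\mathcal{Q})]}{2\nfeatures(\nfeatures\nsamples+2)}=\tfrac{\nsamples}{2}(\alpha-\tfrac{\nsamples}{2})=\beta$, which is the announced formula.

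\paragraph{Main obstacle.}
The differentiation and the $\mathcal{Q}$-identity are routine; the only step that requires some care is the moment computation on the sphere, because one has to recognise the Kronecker structure $\eye_\nsamples\otimes\widetilde{\tangent}$ in $\tr(\MAT{U}^\top\widetilde{\tangent}\MAT{U})$ and keep track of which traces in the ambient $\nfeatures\nsamples$-dimensional space come from $\tr(\MAT{A}\MAT{B})$ versus $\tr(\MAT{A})\tr(\MAT{B})$. Matching these combinatorial factors with the exact form of $\alpha$ and $\beta$ is the only place where an algebraic slip could occur.
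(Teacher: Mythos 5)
Your proposal is correct: the differentiation in Step 1, the reduction via the stochastic representation, the second- and fourth-order sphere moments with the Kronecker structure $\MAT{I}_\nsamples\otimes\widetilde{\tangent}$, the identity $\mathds{E}[\mathcal{Q}u(\mathcal{Q})]=\nsamples\nfeatures$, and the final matching of coefficients with $\alpha=\frac{\nsamples}{2}+\frac{\mathds{E}[\mathcal{Q}^2u'(\mathcal{Q})]}{\nfeatures(\nsamples\nfeatures+2)}$ and $\beta=\frac{\nsamples}{2}(\alpha-\frac{\nsamples}{2})$ all check out. This is essentially the same route as the paper's proof (expected Hessian of the per-sample negative log-likelihood combined with the stochastic representation), so no further comment is needed.
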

\begin{proof}
    See supplementary materials.
\end{proof}

\begin{remark}
    From Cauchy-Schwarz inequality, the Fisher metric of Proposition~\ref{prop:fim} defines a proper Riemannian metric only if $\alpha>0$ and $\alpha+\nfeatures\beta>0$. 
    With a double integration by parts, we have $2\mathds{E}[\mathcal{Q}^2u'(\mathcal{Q})] =  \mathds{E}[\mathcal{Q}^2u(\mathcal{Q})^2]-\nsamples\nfeatures(\nsamples\nfeatures+2)$.
    Thus, the condition is equivalent to $\mathrm{var}[\mathcal{Q}u(\mathcal{Q})]>0$, which is fulfilled as long as $\mathcal{Q}u(\mathcal{Q})$ is not a constant almost surely.
    This happens to be true for every elliptical distribution.
\end{remark}

\begin{table}
    \centering
    \renewcommand{\arraystretch}{1.4}
    \begin{tabular}{ccccc}
        \hline
         distribution & $h(t)$ & $u(t)=-h'(t)/h(t)$  & $\alpha$ & $\beta$
         \\ \hline
         Wishart & $\exp(-t/2)$ & $1$ & $\nsamples/2$ & $0$
         \\
         $t$-Wishart & $(1+\frac{t}{\dof})^{-(\dof+\nsamples\nfeatures)/2}$ & $\frac{\dof+\nsamples\nfeatures}{\dof+t}$ & $\frac\nsamples2\frac{\dof+\nsamples\nfeatures}{\dof+\nsamples\nfeatures+2}$ & $\frac{-\nsamples^2}{2(\dof+\nsamples\nfeatures+2)}$
         \\[3pt] \hline
    \end{tabular}
    \caption{Functions and parameters that characterize the Wishart and $t$-Wishart distributions, from statistical modeling and information geometry point of views.
    In particular, see Definition~\ref{def:EW} and Proposition~\ref{prop:fim}.}
    \label{tab:EW_ex}
\end{table}

The parameters that correspond to two Elliptical Wishart distributions of particular interest in this paper, the Wishart and $t$-Wishart distributions are given in Table~\ref{tab:EW_ex}.
Interestingly, the Fisher information metric of Elliptical Wishart distributions is very similar to the one of multivariate Elliptical distributions~\cite{breloy2018intrinsic}.
The difference resides in the fact that $\alpha$ and $\beta$ depend on the degrees of freedom $\nsamples$ (number of samples) in the Elliptical Wishart case.
This means that Elliptical Wishart distributions and multivariate Elliptical distributions basically share the same information geometry on $\SPDman$.
In fact, this geometry is very well known: it corresponds to the most general form of the praised affine-invariant metric -- which is most known for $\alpha=1$ and $\beta=0$~\cite{bhatia2009positive}.
The geometry for this most general form can for instance be found in~\cite{breloy2018intrinsic}.
Let us recall it.

One of the most important tool in Riemannian geometry is the Levi-Civita connection.
Indeed, most other Riemannian objects are obtained from it, \textit{e.g.}, geodesics, parallel transport, Riemannian Hessian, \textit{etc}.
The Levi-Civita connection generalizes the notion of derivatives of vector fields%
\footnote{
    function that associates a unique tangent vector to every point on the manifold.
}
on manifolds.
On $\SPDman$ associated with the metric of Proposition~\ref{prop:fim}, it is defined, for $\covcenter\in\SPDman$, and vector fields $\tangent_{\covcenter}$ and $\tangentBis_{\covcenter}$ at $\covcenter$, as
\begin{equation}
    \LC{\tangentBis_{\covcenter}}{\tangent_{\covcenter}} = \diff \tangentBis_{\covcenter}[\tangent_{\covcenter}] - \symm(\tangent_{\covcenter}\covcenter^{-1}\tangentBis_{\covcenter}),
\end{equation}
where $\symm(\cdot)$ returns the symmetrical part of its argument.

From there, one can define geodesics, which generalize the concept of straight lines on manifold.
Indeed, these correspond to curves with no acceleration, \textit{i.e.}, $\gamma:\mathds{R}\to\SPDman$ such that $\LC{\dot{\gamma}(t)}{\dot{\gamma}(t)}=\MAT{0}$.
In our case, given a starting point $\covcenter$ and an initial direction $\tangent$, it is
\begin{equation}
    \gamma(t) = \covcenter\expm(t\covcenter^{-1}\tangent),
\end{equation}
where $\expm(\cdot)$ denotes the matrix exponential.
Geodesics allow to define the Riemannian exponential.
For $\covcenter$, it is the mapping $\exp_{\covcenter}:\symspace\to\SPDman$ such that for all $\tangent$, $\exp_{\covcenter}(\tangent)=\gamma(1)$, where $\gamma$ is the geodesic with starting point $\covcenter$ and initial direction $\tangent$, \textit{i.e.},
\begin{equation}
    \exp_{\covcenter}(\tangent) = \covcenter\expm(\covcenter^{-1}\tangent).
    \label{eq:exp}
\end{equation}
In practice, computing the Riemannian exponential can be quite demanding (computationally) and it might be advantageous to limit ourselves to an approximation of it, especially in the context of optimization.
Such a mapping from the tangent spaces onto the manifold is called a retraction~\cite{absil2008optimization}.
In the case of $\SPDman$, one of the best solution, is the second-order approximation given by~\cite{jeuris2012survey}
\begin{equation}
    R_{\covcenter}(\tangent) = \covcenter + \tangent + \frac12 \tangent \covcenter^{-1} \tangent.
    \label{eq:retr}
\end{equation}
From the Riemannian exponential, one can define the Riemannian logarithm, which is the inverse of the Riemannian exponential.
Given $\covcenter$, it is the mapping $\log_{\covcenter}:\symspace\to\SPDman$ such that, for $\cov$,
\begin{equation}
    \log_{\covcenter}(\cov) = \covcenter\logm(\covcenter^{-1}\cov),
\end{equation}
where $\logm(\cdot)$ denotes the matrix logarithm.
These allow to obtain the Fisher-Rao distance for Elliptical Wishart distributions on $\SPDman$.
Given $\covcenter$ and $\cov$, it is given by~\cite{breloy2018intrinsic}
\begin{equation}
    \delta^2(\covcenter,\cov) = \alpha \| \logm(\covcenter^{-1/2}\cov\covcenter^{-1/2}) \|_2^2 + \beta (\log\det(\covcenter^{-1}\cov))^2.
    \label{eq:dist}
\end{equation}

The last geometrical object that we introduce in the present paper is parallel transport.
It is also a very important object of Riemannian geometry, especially when it comes to optimization as it for instance enables to generalize the classical conjugate gradient and BFGS algorithms.
Indeed, it allows to transport tangent vectors from the tangent space of one point onto the tangent space of another point.
Again, it is defined thanks to the Levi-Civita connection.
Given some curve $\gamma:\mathds{R}\to\SPDman$, the idea is to transport a tangent vector $\tau(0)$ at $\gamma(0)$ along $\gamma(\cdot)$ so that one gets the corresponding tangent vector $\tau(t)$ at $\gamma(t)$ for all $t$.
It is ``parallel'' because it is solution to $\LC{\tau(t)}{\dot{\gamma}(t)} = \MAT{0}$.
In our case, given the geodesic with starting point $\covcenter$ and direction $\tangent$, transporting the tangent vector $\tangentBis$ is given by~\cite{jeuris2012survey}
\begin{equation}
    \tau(t) = \expm(t\tangent\covcenter^{-1}/2) \tangentBis \expm(t\covcenter^{-1}\tangent/2).
\end{equation}
Given $\covcenter$ and $\cov$ in $\SPDman$, it yields the following vector transport
\begin{equation}
    \mathcal{T}_{\covcenter\rightarrow\cov}(\tangentBis) = (\cov\covcenter^{-1})^{1/2} \tangentBis (\covcenter^{-1}\cov)^{1/2},
    \label{eq:vec_transp}
\end{equation}
which transports $\tangentBis$ from $\covcenter$ to $\cov$.

\section{Maximum likelihood estimator and convergence analysis}
\label{sec:mle}

In this section, the maximum likelihood estimator (MLE) is computed in two different ways: \textit{(i)} a fixed point algorithm; and \textit{(ii)} an algorithm exploiting Riemannian optimization with the information geometry of Section~\ref{subsec:info_geo}.
While the fixed point algorithm was not previously published, a simpler version of the Riemannian optimization one can be found in~\cite{ayadi2023elliptical}.
However, the one in this paper is a little more sophisticated since it is not limited to steepest descent and allows for the use of conjugate gradient or BFGS algorithms.
The two other major contributions of this section, are: \textit{(i)} the existence and uniqueness of the maximum likelihood estimator; and \textit{(ii)} the convergence analysis of proposed algorithms.

\subsection{Maximum likelihood estimator}
\label{subsec:mle:algos}

Given $\nmatrices$ samples $\{\cov_k\}_{k=1}^{\nmatrices}$, the maximum likelihood estimator $\estimator$ of an Elliptical Wishart distribution $\EWishart{\covcenter}$ is defined as the solution to the optimization problem
\begin{equation}
    \argmin_{\covcenter\in\SPDman} \quad L(\covcenter),
    \label{eq:mle}
\end{equation}
where $L(\cdot)$ is the negative log-likelihood defined in~\eqref{eq:log_lik}.
In the general case, no closed form solution can be obtained to solve this problem%
\footnote{
    To our knowledge a closed form solution is only available for the Wishart distribution.
}.
To derive iterative procedures that yield the maximum likelihood estimator, the first thing is to do is to compute the Euclidean gradient of $L(\cdot)$.
This is achieved in Proposition~\ref{prop:eucl_grad}.

\begin{proposition}[Euclidean gradient of the negative log-likelihood~\eqref{eq:log_lik}]
    \label{prop:eucl_grad}
    The Euclidean gradient $\grad_{\ambient} L(\covcenter)$ of the negative log-likelihood $L(\cdot)$ at $\covcenter\in\SPDman$ is given by
    \begin{equation*}
        \grad_{\ambient} L(\covcenter) = \frac12 \covcenter^{-1} \left( \nsamples\nmatrices\covcenter - \sum_{k=1}^\nmatrices u(\tr(\covcenter^{-1}\cov_k)) \cov_k \right) \covcenter^{-1},
    \end{equation*}
    where $u(\cdot)$ is defined in Proposition~\ref{prop:fim}. 
\end{proposition}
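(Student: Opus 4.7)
The plan is to obtain the Euclidean gradient by computing the directional derivative $\diff L(\covcenter)[\tangent]$ for an arbitrary $\tangent \in \symspace$ and then reading off the gradient from the identity $\diff L(\covcenter)[\tangent] = \tr(\grad_{\ambient} L(\covcenter)\,\tangent)$. This is legitimate because $\SPDman$ is open in $\symspace$, so tangent vectors are just elements of $\symspace$ and the Euclidean gradient is identified via the Frobenius inner product on symmetric matrices.

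First I would differentiate the two terms of $L$ separately. For the log-determinant term, I would invoke the standard identity $\diff\log\det(\covcenter)[\tangent] = \tr(\covcenter^{-1}\tangent)$. For the density generator term, I would apply the chain rule to $\log h \circ \phi_k$, where $\phi_k(\covcenter) = \tr(\covcenter^{-1}\cov_k)$. Using the well-known derivative of the inverse, $\diff(\covcenter^{-1})[\tangent] = -\covcenter^{-1}\tangent\covcenter^{-1}$, together with linearity of the trace, gives $\diff\phi_k(\covcenter)[\tangent] = -\tr(\covcenter^{-1}\tangent\covcenter^{-1}\cov_k)$. Combining these and substituting $h'/h = -u/2$ from the definition $u(t) = -2h'(t)/h(t)$ of Proposition~\ref{prop:fim} yields
\begin{equation*}
    \diff L(\covcenter)[\tangent] = \frac{\nsamples\nmatrices}{2}\tr(\covcenter^{-1}\tangent) - \frac12\sum_{k=1}^\nmatrices u(\tr(\covcenter^{-1}\cov_k))\,\tr(\covcenter^{-1}\tangent\covcenter^{-1}\cov_k).
\end{equation*}

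The next step is a cosmetic rearrangement using cyclic invariance of the trace. Writing $\tr(\covcenter^{-1}\tangent) = \tr(\covcenter^{-1}\covcenter\covcenter^{-1}\tangent)$ and $\tr(\covcenter^{-1}\tangent\covcenter^{-1}\cov_k) = \tr(\covcenter^{-1}\cov_k\covcenter^{-1}\tangent)$, I would factor out $\covcenter^{-1}(\cdot)\covcenter^{-1}$ on the left of $\tangent$ to obtain
\begin{equation*}
    \diff L(\covcenter)[\tangent] = \tr\!\left( \frac12 \covcenter^{-1}\!\left( \nsamples\nmatrices\covcenter - \sum_{k=1}^\nmatrices u(\tr(\covcenter^{-1}\cov_k))\,\cov_k \right)\!\covcenter^{-1} \,\tangent \right).
\end{equation*}
The matrix multiplying $\tangent$ inside the trace is symmetric (it is a product of the form $A^{-1}BA^{-1}$ with $A,B$ symmetric), so it indeed represents the gradient, and the formula follows.

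I do not anticipate any real obstacle here since everything reduces to standard matrix calculus; the only point that requires a bit of care is keeping track of the factor of $2$ in the definition of $u$, which must match the convention of Proposition~\ref{prop:fim} so that the final expression is consistent with Table~\ref{tab:EW_ex} (e.g., $u \equiv 1$ for the Wishart case, in which $\sum_k u(\cdot)\cov_k = \sum_k \cov_k$ recovers the usual MLE equation $\nsamples\nmatrices\covcenter = \sum_k \cov_k$).
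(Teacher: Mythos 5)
Your proposal is correct and follows the standard matrix-calculus route (directional derivative, log-det and inverse-derivative identities, chain rule through $\log h$, identification of the gradient via the Frobenius inner product), which is the same approach the paper uses; the factor-of-two bookkeeping with $u(t)=-2h'(t)/h(t)$ is handled consistently and the Wishart sanity check confirms it.
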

\begin{proof}
    See supplementary materials.
\end{proof}
%

As for the multivariate case, the fixed-point algorithm arises by setting the equation characterizing critical points, \textit{i.e.}, $\grad_{\ambient} L(\covcenter)=\MAT{0}$.
Indeed, it yields the following fixed-point equation
\begin{equation}
    \covcenter = \frac1{\nsamples\nmatrices}\sum_{k=1}^\nmatrices u(\tr(\covcenter^{-1}\cov_k)) \cov_k.
    \label{eq:fixed_point}
\end{equation}
Recall that examples of the function $u(\cdot)$ for Wishart and $t$-Wishart distributions are given in Table~\ref{tab:EW_ex}. 
Notice that in the Wishart case, from $u(t)=1$, we immediately get the closed form solution
\begin{equation}
    \estimator_{\textup{W}} = \frac1{\nsamples\nmatrices}\sum_{k=1}^\nmatrices \cov_k.
    \label{eq:mle_Wishart}
\end{equation}

Equation~\eqref{eq:fixed_point} naturally yields the fixed-point algorithm provided in Algorithm~\ref{algo:fixed_point}.
This algorithm is very similar to the one obtained in the multivariate elliptical case, or more generally in the case of $M$-estimators~\cite{maronna1976robust,tyler1987distribution}.
Even though it is quite simple, these algorithms have shown themselves very powerful in the latter cases.
Notice that, contrary to the multivariate case for which one needs at least $\nsamples=\nfeatures+1$ samples for the algorithm to be defined, there is no condition on the number of matrices $\nmatrices$ here, \textit{i.e.}, it works even for $\nmatrices=1$.
This is simply due to the fact that matrices $\cov_k$ are SPD.

\begin{algorithm}[t!]
    \caption{Elliptical Wishart MLE -- fixed-point algorithm}
    \label{algo:fixed_point}
    \begin{algorithmic}
        \STATE {\bfseries Input:} matrices $\{\cov_k\}_{k=1}^\nmatrices$, degrees of freedom $\nsamples$, function $u(\cdot)$, initial guess $\covcenter_0$.
        \STATE $t=0$
        \REPEAT
            \STATE $\covcenter_{t+1} = \frac1{\nsamples\nmatrices}\sum_{k=1}^\nmatrices u(\tr(\covcenter_t^{-1}\cov_k)) \cov_k$
            \STATE $t = t+1$
        \UNTIL{convergence \textbf{or} maximum iterations}
        \STATE {\bfseries Return:} maximum likelihood estimator $\estimator=\covcenter_t$
    \end{algorithmic} 
\end{algorithm}

To obtain a Riemannian optimization based algorithm, the first step is to compute the Riemannian gradient.
This is achieved by transforming the Euclidean gradient derived in Proposition~\ref{prop:eucl_grad}.
To do so, one exploits Proposition~\ref{prop:egrad2rgrad}, which allows to obtain the Riemannian gradient of $L(\cdot)$ on $\SPDman$ equipped with the Fisher metric from Proposition~\ref{prop:fim} from the Euclidean gradient.
This formula was already derived in~\cite{bouchard2021riemannian}.
\begin{proposition}[Euclidean gradient to Riemannian gradient~\cite{bouchard2021riemannian}]
    \label{prop:egrad2rgrad}
    Given a cost function $L:\SPDman\to\mathds{R}$ with Euclidean gradient $\grad_{\ambient} L(\covcenter)$ at $\covcenter\in\SPDman$, one obtains the Riemannian gradient $\grad L(\covcenter)$ on $\SPDman$ equipped with the Fisher metric of Proposition~\ref{prop:fim} through the formula
    \begin{equation*}
        \grad L(\covcenter) = \frac1\alpha \covcenter \grad_{\ambient} L(\covcenter) \covcenter - \frac{\beta}{\alpha(\alpha+\nfeatures\beta)}\tr(\grad_{\ambient} L(\covcenter) \covcenter) \covcenter.
    \end{equation*}
\end{proposition}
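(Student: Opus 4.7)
The plan is to determine $\grad L(\covcenter)$ via its defining property. By definition of the Riemannian gradient with respect to the Fisher metric of Proposition~\ref{prop:fim}, $\grad L(\covcenter) \in \symspace$ is the unique symmetric matrix such that, for every $\tangent \in \symspace$,
\begin{equation*}
    \alpha\,\tr(\covcenter^{-1}\grad L(\covcenter)\covcenter^{-1}\tangent) + \beta\,\tr(\covcenter^{-1}\grad L(\covcenter))\tr(\covcenter^{-1}\tangent) = \tr(\grad_{\ambient} L(\covcenter)\,\tangent),
\end{equation*}
since the right-hand side is the directional derivative $\diff L(\covcenter)[\tangent]$ expressed through the Euclidean gradient. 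The existence and uniqueness of a solution are guaranteed by the remark following Proposition~\ref{prop:fim}, which ensures that the Fisher metric is nondegenerate precisely when $\alpha>0$ and $\alpha + \nfeatures\beta>0$.

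Next, I would use the structure of the metric, which is a rank-two perturbation of the standard affine-invariant inner product, to motivate the ansatz
\begin{equation*}
    \grad L(\covcenter) = a\,\covcenter\, \grad_{\ambient} L(\covcenter)\,\covcenter + b\,\tr(\grad_{\ambient} L(\covcenter)\,\covcenter)\,\covcenter,
\end{equation*}
with two unknown scalars $a,b$. This candidate lies in $\symspace$ because $\grad_{\ambient} L(\covcenter)$ and $\covcenter$ are symmetric. Substituting it into the defining identity and using the cyclic property of the trace, the left-hand side splits into one term proportional to $\tr(\grad_{\ambient} L(\covcenter)\,\tangent)$ and one term proportional to $\tr(\grad_{\ambient} L(\covcenter)\,\covcenter)\,\tr(\covcenter^{-1}\tangent)$.

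Collecting coefficients then yields a decoupled two-by-two linear system: matching the first group forces $\alpha a = 1$, while matching the second group forces $\alpha b + \beta(a + \nfeatures b) = 0$. Solving, one obtains $a = 1/\alpha$ and $b = -\beta/[\alpha(\alpha+\nfeatures\beta)]$, which is exactly the formula stated in the proposition. I would finish by noting that $\alpha(\alpha+\nfeatures\beta)\ne 0$ thanks to the remark above, so the expression is well defined, and that uniqueness of the Riemannian gradient shows the ansatz is not a mere guess but the actual answer.

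The proof is essentially a structured calculation, so there is no substantial obstacle; the only subtlety is to justify that the particular two-parameter ansatz is sufficient, which follows because the Fisher metric at $\covcenter$ differs from the canonical affine-invariant metric only by a rank-one correction along the direction $\covcenter$, and hence its inverse acts as a rank-one correction of the affine-invariant gradient-raising map $G \mapsto \covcenter G \covcenter$.
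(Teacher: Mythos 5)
Your derivation is correct and takes the standard route that the paper itself defers to the cited reference: write the defining identity $\metric{\covcenter}{\grad L(\covcenter)}{\tangent} = \tr(\grad_{\ambient} L(\covcenter)\,\tangent)$ for all $\tangent\in\symspace$, plug in the two-parameter ansatz, and solve the resulting linear system for $a$ and $b$. One small wording point: the two functionals $\tangent\mapsto\tr(\grad_{\ambient} L(\covcenter)\,\tangent)$ and $\tangent\mapsto\tr(\grad_{\ambient} L(\covcenter)\covcenter)\tr(\covcenter^{-1}\tangent)$ need not be linearly independent for every $\grad_{\ambient} L(\covcenter)$, so the coefficient matching is a sufficient choice rather than a forced one --- but the uniqueness of the Riemannian gradient, which you invoke, is exactly what closes the argument.
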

%

Given the iterate $\covcenter_t$, the goal is to obtain a descent direction of $L(\cdot)$ from the Riemannian gradient $\grad L(\covcenter_t)$, .
To achieve this, several possibilities exist; see, \textit{e.g.},~\cite{absil2008optimization,boumal2023introduction}.
The simplest one is the steepest descent, which is
\begin{equation}
    \tangent_t = - \grad L(\covcenter_t),
    \label{eq:steep_desc}
\end{equation}
Exploiting the vector transport~\eqref{eq:vec_transp}, more sophisticated options are available, such as descent directions from conjugate gradient or BFGS algorithms.
For instance, the descent direction of the conjugate gradient procedure is
\begin{equation}
    \tangent_t =  - \grad L(\covcenter_t) + \kappa_t \mathcal{T}_{\covcenter_{t-1}\to\covcenter_t}(\lambda_{t-1}\tangent_{t-1}),
    \label{eq:conj_grad}
\end{equation}
where $\lambda_{t-1}$ is the stepsize of the previous iteration, and $\kappa_t$ is a scalar that can be computed with a rule as the ones in~\cite{absil2008optimization}.
The next iterate is then
\begin{equation}
    \covcenter_{t+1} = R_{\covcenter_t}(\lambda_t\tangent_t),
\end{equation}
where $\lambda_t$ is the stepsize, which can for instance be computed with a linesearch, and $R(\cdot)$ is the retraction defined in~\eqref{eq:retr}.
Alternatives to~\eqref{eq:retr} can be considered, such as the exponential mapping~\eqref{eq:exp}.
The Riemannian optimization based algorithm to estimate the Elliptical Wishart maximum likelihood estimator is summarized in Algorithm~\ref{algo:riem}.

\begin{algorithm}
    \caption{Elliptical Wishart MLE -- Riemannian optimization algorithm}
    \label{algo:riem}
    \begin{algorithmic}
        \STATE {\bfseries Input:} matrices $\{\cov_k\}_{k=1}^\nmatrices$, degrees of freedom $\nsamples$, function $u(\cdot)$, initial guess $\covcenter_0$.
        \STATE $t=0$
        \REPEAT
            \STATE Compute $\grad L(\covcenter_t)$ with Propositions~\ref{prop:eucl_grad} and~\ref{prop:egrad2rgrad}
            \STATE Compute descent direction $\tangent_t$ with, \textit{e.g.},~\eqref{eq:steep_desc} or~\eqref{eq:conj_grad}
            \STATE Compute stepsize $\lambda_t$ and next iterate $\covcenter_{t+1} = R_{\covcenter_t}(\lambda_t\tangent_t)$
            \STATE t=t+1
        \UNTIL{convergence \textbf{or} maximum iterations}
        \STATE {\bfseries Return:} maximum likelihood estimator $\estimator=\covcenter_t$
    \end{algorithmic}
\end{algorithm}

\begin{remark}
    The fixed point algorithm from Algorithm~\ref{algo:fixed_point} can be seen as a special case of the Riemannian optimization based algorithm in Algorithm~\ref{algo:riem}.
    Indeed, the fixed point algorithm corresponds to the Riemannian one with (i) the steepest descent procedure~\eqref{eq:steep_desc} for the Riemannian gradient associated to parameters $\alpha=1$ and $\beta=0$ of the Fisher metric; (ii) constant stepsize $\lambda_t=1/\nsamples\nmatrices$; and (iii) the Euclidean retraction, i.e., $R_{\covcenter}(\tangent)=\covcenter+\tangent$.
\end{remark}

We now have two iterative algorithms to compute the Elliptical Wishart maximum likelihood estimator in practice.
Even though one is a special case of the other, we still chose to differentiate them due to the historical importance of fixed-point algorithms in the elliptical literature.
Interestingly, as we will see later, contrary to the multivariate case, the Riemannian optimization based algorithm is significantly more efficient (in terms of iterations before convergence) than the fixed-point one.
To ensure that these algorithms actually provide meaningful estimators, it remains to study the optimization problem and to conduct a convergence analysis of the algorithms.

\subsection{Existence and uniqueness of the maximum likelihood estimator}
\label{subsec:mle:exist_unique}

The first thing to do when it comes to check that Algorithms are actually meaningful in practice is to verify that the maximum likelihood estimator resulting from the optimization problem~\eqref{eq:mle} exists and is unique.
This is the objective of this subsection.
This is achieved by studying the fixed-point equation~\eqref{eq:fixed_point}.
To obtain existence and uniqueness, we also need additional assumptions.
To state them, let us introduce the function $\psi:\mathds{R}^{+}\to\mathds{R}$ such that, for all $t\in\mathds{R}^{+}$, $\psi(t)=t u(t)$.
These additional assumptions are given in Assumptions~\ref{assump:exist_unique_1}, \ref{assump:exist_unique_2} and~\ref{assump:exist_unique_3}.

\begin{assumption}
    \label{assump:exist_unique_1}
    $u:\mathds{R}^{+}\to\mathds{R}$ defined in Proposition~\ref{prop:fim} is continuous, non-increasing and non-negative -- strictly non-negative for $t>0$.
\end{assumption}

\begin{assumption}
    \label{assump:exist_unique_2}
    $\psi:\mathds{R}^{+}\to\mathds{R}$ is non-decreasing, and its supremum is greater than $\nsamples\nfeatures$, \textit{i.e.}, $\sup_{t\geq0} \psi(t) > \nsamples\nfeatures$.
    Notice that, $\sup_{t\geq0} \psi(t)=+\infty$ is allowed.
\end{assumption}

\begin{assumption}
    \label{assump:exist_unique_3}
    $\psi:\mathds{R}^{+}\to\mathds{R}$ is strictly increasing when $\psi(t)<\sup_{t\geq0} \psi(t)$.
\end{assumption}

\begin{remark}
    The assumptions required to prove the existence of $M$-estimators -- which include maximum likelihood estimators of multivariate Elliptical distributions -- are somewhat similar to Assumptions~\ref{assump:exist_unique_1}, \ref{assump:exist_unique_2} and~\ref{assump:exist_unique_3}~\cite{maronna1976robust}.
    The main difference is that $M$-estimators require an additional assumption specifying that data should not be too concentrated in low-dimensional linear subspaces~\cite[Condition E]{maronna1976robust}.
    This is not needed in our case because we are dealing with SPD samples.
\end{remark}

In practice, it is possible to verify that many Elliptical Wishart distributions fulfill Assumptions~\ref{assump:exist_unique_1}, \ref{assump:exist_unique_2} and~\ref{assump:exist_unique_3}.
In particular, let us look at our two examples, the Wishart and $t$-Wishart distributions, for which the corresponding functions $u(\cdot)$ are given in Table~\ref{tab:EW_ex}.
For Wishart, since $u(t)=1$, the three assumptions are readily checked.
Since in that case, the solution is known in closed form -- see~\eqref{eq:mle_Wishart} -- this has no practical use.
For the $t$-Wishart distribution, from $u(t)=\frac{\dof+\nsamples\nfeatures}{\dof+t}$, one can easily see that, for $t\geq0$, $t\mapsto u(t)$ is indeed continuous, non-increasing and strictly positive.
Moreover, $t\mapsto\psi(t)$ is also strictly increasing and bounded, with $\sup_{t\geq0} \psi(t) = \nsamples\nfeatures+\dof > \nsamples\nfeatures$.
This shows that all three assumptions are verified in this case.

Assumptions~\ref{assump:exist_unique_1}, \ref{assump:exist_unique_2} and~\ref{assump:exist_unique_3} are enough to ensure existence and uniqueness of a solution to the fixed-point equation~\eqref{eq:fixed_point}, which characterize the maximum likelihood estimator solutions.
This is proved in Proposition~\ref{prop:exist_unique}.

\begin{proposition}[Existence and uniqueness of Elliptical Wishart MLE]
    \label{prop:exist_unique}
    Under Assumptions~\ref{assump:exist_unique_1} and~\ref{assump:exist_unique_2}, equation~\eqref{eq:fixed_point} admits a solution on $\SPDman$.
    Assumption~\ref{assump:exist_unique_3} further ensures this solution to be unique. 
\end{proposition}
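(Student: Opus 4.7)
The plan is to identify solutions of the fixed-point equation~\eqref{eq:fixed_point} with critical points of the negative log-likelihood $L$ in~\eqref{eq:log_lik} -- by Proposition~\ref{prop:eucl_grad}, $\grad_\ambient L(\covcenter)=\MAT{0}$ is equivalent to~\eqref{eq:fixed_point} -- so that it suffices to prove existence and uniqueness of a minimizer of $L$ on $\SPDman$. For existence, the strategy is to show that $L$ is coercive, i.e.~$L(\covcenter)\to+\infty$ whenever $\covcenter$ leaves a compact subset of $\SPDman$. The easy regime is $\lambda_{\max}(\covcenter)\to+\infty$ with $\lambda_{\min}(\covcenter)$ bounded below: $\frac{\nsamples\nmatrices}{2}\log\det\covcenter\to+\infty$ while, since $(-\log h)'=u/2\geq 0$ by Assumption~\ref{assump:exist_unique_1}, each term $-\log h(\tr(\covcenter^{-1}\cov_k))$ stays bounded below by $-\log h(0)$. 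The delicate regime is $\lambda_{\min}(\covcenter)\to 0$. There, Assumption~\ref{assump:exist_unique_2} yields $\delta>0$ with $\psi(t)\geq\nsamples\nfeatures+\delta$, hence $u(t)\geq(\nsamples\nfeatures+\delta)/t$, for all sufficiently large $t$; integrating $(-\log h)'=u/2$ gives $-\log h(t)\geq\frac{\nsamples\nfeatures+\delta}{2}\log t+O(1)$. Combined with the uniform bounds $\tr(\covcenter^{-1}\cov_k)\geq\lambda_{\min}(\cov_k)/\lambda_{\min}(\covcenter)$ and $\log\det\covcenter\geq\nfeatures\log\lambda_{\min}(\covcenter)$, this produces
\[
L(\covcenter)\geq\frac{\nmatrices\delta}{2}\bigl|\log\lambda_{\min}(\covcenter)\bigr|+O(1),
\]
which tends to $+\infty$ as $\lambda_{\min}(\covcenter)\to 0$. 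Coercivity and continuity then yield a minimizer, which by Proposition~\ref{prop:eucl_grad} solves~\eqref{eq:fixed_point}.

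For uniqueness, I would prove strict geodesic convexity of $L$ along the affine-invariant geodesics $\covcenter(t)=\covcenter^{1/2}\expm(tV)\covcenter^{1/2}$ with $V\in\symspace$, $V\neq\MAT{0}$. The log-determinant term is linear in $t$. For each $k$, diagonalising $V$ with eigenvalues $\mu_1,\dots,\mu_\nfeatures$ yields
\[
s_k(t):=\tr(\covcenter(t)^{-1}\cov_k)=\sum_i a_i^{(k)} e^{-t\mu_i},
\]
with strictly positive coefficients $a_i^{(k)}$ -- the diagonal entries of an orthogonal conjugate of the positive definite matrix $\covcenter^{-1/2}\cov_k\covcenter^{-1/2}$. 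Cauchy--Schwarz gives $(s_k'(t))^2\leq s_k(t) s_k''(t)$, and combined with $u$ non-increasing (Assumption~\ref{assump:exist_unique_1}) a direct computation yields
\[
\frac{d^2}{dt^2}\bigl[-\log h(s_k(t))\bigr]\geq\frac{s_k''(t)}{2}\psi'(s_k(t))\geq 0,
\]
by Assumption~\ref{assump:exist_unique_2}. Two observations force strict convexity at every critical point: \textit{(i)} $V\neq\MAT{0}$ makes $s_k''(t)>0$ for each $k$; \textit{(ii)} taking the trace of~\eqref{eq:fixed_point} gives $\frac{1}{\nmatrices}\sum_k\psi(s_k)=\nsamples\nfeatures<\sup\psi$ at a critical point, so that Assumption~\ref{assump:exist_unique_3} prevents $\psi'(s_k)$ from vanishing for all $k$ simultaneously. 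If $L$ admitted two distinct minimizers, it would be constant along the geodesic joining them, forcing the lower bound above to vanish identically on that geodesic -- contradicting \textit{(i)} and \textit{(ii)}.

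The main obstacle is the coercivity estimate as $\lambda_{\min}(\covcenter)\to 0$: the $\log\det$ growth and the asymptotic lower bound on $-\log h$ cancel exactly at the boundary case $\sup\psi=\nsamples\nfeatures$, so the proof must promote the strict gap $\sup\psi>\nsamples\nfeatures$ of Assumption~\ref{assump:exist_unique_2} into a quantitative margin and ensure this margin survives all possible joint behaviours of the remaining eigenvalues of $\covcenter$.
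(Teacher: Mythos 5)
Your argument is essentially correct, but it takes a genuinely different route from the paper. The paper proves Proposition~\ref{prop:exist_unique} by studying the fixed-point equation~\eqref{eq:fixed_point} directly, in the spirit of Maronna's treatment of $M$-estimators (this is what the remark following Assumption~\ref{assump:exist_unique_3} alludes to): existence via a compactness argument on the map $\covcenter\mapsto\frac1{\nsamples\nmatrices}\sum_k u(\tr(\covcenter^{-1}\cov_k))\cov_k$, and uniqueness via an order-theoretic comparison of two putative solutions exploiting the monotonicity of $u$ and $\psi$. You instead work variationally: coercivity of $L$ for existence, geodesic convexity along affine-invariant geodesics for uniqueness. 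Your route is attractive in that it unifies the statement with Proposition~\ref{prop:geo_convex} (which the paper deliberately keeps separate, under the distinct Assumptions~\ref{assump:geo_convex} and~\ref{assump:strict_geo_convex}), and your coercivity estimate correctly identifies where the strict gap $\sup\psi>\nsamples\nfeatures$ is consumed, including under joint degeneration of the eigenvalues. The price is that your computation implicitly requires $u$, hence $\psi$, to be differentiable, which Assumptions~\ref{assump:exist_unique_1}--\ref{assump:exist_unique_3} do not grant; the monotonicity-based argument does not need this.

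One step needs tightening. You claim that Assumption~\ref{assump:exist_unique_3} ``prevents $\psi'(s_k)$ from vanishing for all $k$ simultaneously'' at a critical point. Pointwise this is false: a strictly increasing differentiable function can have zero derivative at isolated points, so the Hessian of $L$ at a critical point may be singular and you do not actually obtain strict convexity \emph{at} every critical point. What saves the proof is the global contradiction you sketch at the end: if $L$ were constant on the geodesic joining two minimizers, then $s_k''(t)\,\psi'(s_k(t))=0$ for all $t\in[0,1]$ and all $k$; since $s_k''>0$ when $V\neq\MAT{0}$, $\psi'$ must vanish on the whole non-degenerate interval $s_k([0,1])$, and for the index $k_0$ with $\psi(s_{k_0}(0))\leq\nsamples\nfeatures<\sup\psi$ (which exists by the trace identity $\frac1\nmatrices\sum_k\psi(s_k)=\nsamples\nfeatures$) this interval meets the region where $\psi<\sup\psi$, forcing $\psi$ to be constant there and contradicting Assumption~\ref{assump:exist_unique_3}. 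Phrase the strictness argument on that interval rather than at the point and the uniqueness part closes.
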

\begin{proof}
    See supplementary materials.
\end{proof}

\subsection{Convergence analysis of fixed-point and Riemannian based algorithms}

It is now established that the maximum likelihood estimator of Elliptical Wishart distributions -- defined through the optimization problem~\eqref{eq:mle} -- exists and is unique.
The next step is to study the convergence of Algorithms~\ref{algo:fixed_point} and~\ref{algo:riem}, which aim to estimate this maximum likelihood estimator.
As explained in Section~\ref{subsec:mle:algos}, Algorithm~\ref{algo:fixed_point} can be viewed as a special case of Algorithm~\ref{algo:riem}.
However, we still study their convergence separately by leveraging very different tools.
Concerning the fixed-point algorithm given in Algorithm~\ref{algo:fixed_point}, its convergence is straightforward to determine.
It is provided in Proposition~\ref{prop:convergence_fixed_point}.

\begin{proposition}[Convergence of the fixed-point algorithm]
    \label{prop:convergence_fixed_point}
    Given that Assumptions~\ref{assump:exist_unique_1}, \ref{assump:exist_unique_2} and~\ref{assump:exist_unique_3} are satisfied -- which guarantee the existence and uniqueness of the maximum likelihood estimator -- the sequence $\{\covcenter_t\}$, produced by Algorithm~\ref{algo:fixed_point} from any starting point $\covcenter_0\in\SPDman$, converges to the maximum likelihood estimator $\estimator$ defined in~\eqref{eq:mle}.
\end{proposition}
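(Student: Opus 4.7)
The plan is to recognize Algorithm~\ref{algo:fixed_point} as a Majorization-Minimization (MM) scheme for $L$, use monotone descent together with coercivity to confine the iterates to a compact subset of $\SPDman$, and then invoke the uniqueness result from Proposition~\ref{prop:exist_unique} to conclude.

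Setting $\phi(s) = -\log\densitygenerator(s)$ so that $\phi'(s) = u(s)/2$, Assumption~\ref{assump:exist_unique_1} makes $\phi$ concave. The tangent inequality at $s_{k,t} = \tr(\covcenter_t^{-1}\cov_k)$ produces the affine majorizer
$$Q(\covcenter;\covcenter_t) = \frac{\nsamples\nmatrices}{2}\log\det\covcenter + \frac{1}{2}\sum_{k=1}^{\nmatrices} u(s_{k,t})\tr(\covcenter^{-1}\cov_k) + C_t,$$
with $C_t$ independent of $\covcenter$, satisfying $L(\covcenter) \leq Q(\covcenter;\covcenter_t)$ with equality at $\covcenter=\covcenter_t$. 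The mapping $\covcenter\mapsto Q(\covcenter;\covcenter_t)$ is strictly convex in $\covcenter^{-1}$ (after the substitution $\MAT{A}=\covcenter^{-1}$ it reduces to $-\log\det$ plus a linear term), and cancelling $\nabla_{\covcenter} Q$ yields the unique, Wishart-like minimizer $\covcenter = \frac{1}{\nsamples\nmatrices}\sum_k u(s_{k,t})\cov_k$, which is precisely $\covcenter_{t+1}$. Hence the MM inequalities
$$L(\covcenter_{t+1}) \leq Q(\covcenter_{t+1};\covcenter_t) \leq Q(\covcenter_t;\covcenter_t) = L(\covcenter_t)$$
show that $\{L(\covcenter_t)\}$ is non-increasing, and bounded below by virtue of Proposition~\ref{prop:exist_unique}, hence convergent.

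Next I would establish that $L$ is coercive at the boundary of $\SPDman$: whenever a sequence in $\SPDman$ has an eigenvalue tending to $0$ or $+\infty$, Assumption~\ref{assump:exist_unique_2} forces $L(\covcenter)\to+\infty$ by balancing the growth of the $\phi$ terms (controlled by $\sup\psi>\nsamples\nfeatures$) against $\frac{\nsamples\nmatrices}{2}\log\det\covcenter$. Combined with the monotone decrease of $L(\covcenter_t)$, this traps the sequence in a compact subset of $\SPDman$. Extracting a convergent subsequence $\covcenter_{t_j}\to\covcenter^*\in\SPDman$ and exploiting continuity of the iteration map $F(\covcenter)=\frac{1}{\nsamples\nmatrices}\sum_k u(\tr(\covcenter^{-1}\cov_k))\cov_k$, I get $F(\covcenter^*)=\lim_j F(\covcenter_{t_j})=\lim_j\covcenter_{t_j+1}$. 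The equalities $L(\covcenter^*)=\lim_j L(\covcenter_{t_j})=\lim_j L(\covcenter_{t_j+1})=L(F(\covcenter^*))$ collapse the MM sandwich; strict convexity of $Q(\cdot;\covcenter^*)$ in $\covcenter^{-1}$ then forces $F(\covcenter^*)=\covcenter^*$. By Proposition~\ref{prop:exist_unique} this fixed point is unique and equal to $\estimator$; since every subsequential limit coincides, the whole sequence converges to $\estimator$.

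The main obstacle is the coercivity step: one must carefully quantify how fast $\sum_k\phi(\tr(\covcenter^{-1}\cov_k))$ grows as $\covcenter$ approaches the boundary of $\SPDman$ in various directions (a single small eigenvalue, several simultaneously, large eigenvalues), and verify that the bound $\sup\psi>\nsamples\nfeatures$ in Assumption~\ref{assump:exist_unique_2} suffices to dominate the $\log\det$ term. This is essentially the same technical obstruction encountered in the existence part of Proposition~\ref{prop:exist_unique}, which one can hope to reuse verbatim.
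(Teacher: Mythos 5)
Your majorization--minimization skeleton is sound and correctly executed where you execute it: with $\phi=-\log\densitygenerator$ one indeed has $\phi'=u/2\geq 0$ and $\phi''=u'/2\leq 0$ under Assumption~\ref{assump:exist_unique_1}, the tangent-line surrogate $Q(\cdot\,;\covcenter_t)$ majorizes $L$ with equality at $\covcenter_t$, its unique minimizer is exactly the update of Algorithm~\ref{algo:fixed_point} (positive definite because $u>0$ on $(0,\infty)$ and the $\cov_k$ are SPD), and the subsequence argument --- continuity of the iteration map, collapse of the MM sandwich at a cluster point, uniqueness of the minimizer of $Q(\cdot\,;\covcenter^*)$, then Proposition~\ref{prop:exist_unique} to identify every cluster point with $\estimator$ --- is a standard and correct way to finish \emph{once compactness of the iterates is in hand}.

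That compactness is precisely where the genuine gap sits, and you have not closed it; you have only expressed the hope that it can be ``reused verbatim'' from the existence proof. Two problems. First, the claim that $\{L(\covcenter_t)\}$ is bounded below ``by virtue of Proposition~\ref{prop:exist_unique}'' is a non sequitur: that proposition delivers a unique solution of the fixed-point equation, i.e.\ a critical point, not a lower bound on $L$; boundedness below is itself a consequence of the coercivity you defer. Second, the coercivity is not symmetric in the two boundary directions and only one of them is governed by $\sup_{t\ge0}\psi(t)>\nsamples\nfeatures$. When an eigenvalue of $\covcenter$ tends to $0$, the argument of $\phi$ blows up and Assumption~\ref{assump:exist_unique_2} controls the competition with $\log\det\covcenter$; but when eigenvalues of $\covcenter$ tend to $+\infty$, the arguments of $\phi$ tend to $0^+$, and if $\densitygenerator(0^+)=+\infty$ the sum $\sum_k\phi(\tr(\covcenter^{-1}\cov_k))$ diverges to $-\infty$ at a rate governed by $\inf_{t\ge0}\psi(t)$, not by $\sup\psi$. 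One then needs $\inf\psi<\nsamples\nfeatures$, which follows from integrability of the density generator combined with monotonicity of $\psi$, but this is an argument you must actually make --- it is not contained in Assumptions~\ref{assump:exist_unique_1}--\ref{assump:exist_unique_3} as stated, and it is not guaranteed that the existence proof of Proposition~\ref{prop:exist_unique} proceeds through coercivity of $L$ at all (the paper describes it as a study of the fixed-point equation itself). Note finally that there is a more elementary route that bypasses coercivity entirely and is likely closer to what ``straightforward'' refers to: Assumption~\ref{assump:exist_unique_1} makes the iteration map $F$ order-preserving for the Loewner order, and Assumptions~\ref{assump:exist_unique_2}--\ref{assump:exist_unique_3} let one sandwich $a\estimator\preceq\covcenter_0\preceq b\estimator$ and show $F^t(a\estimator)$ and $F^t(b\estimator)$ both converge monotonically to $\estimator$; that argument uses only the fixed point itself as the compactifying object.
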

\begin{proof}
    See supplementary materials.
\end{proof}

We can now study the convergence of the Riemannian optimization based algorithm to obtain the maximum likelihood estimator~\eqref{eq:mle} described in Algorithm~\ref{algo:riem}.
To achieve this, we leverage the concept of (strict) geodesic convexity, which generalizes usual (strict) convexity to Riemannian manifolds.
Formally, an objective function $L:\SPDman\to\mathds{R}$ is said geodesically convex if, for $\covcenter$, $\MAT{H}\in\SPDman$,
\begin{equation}
    \forall t \in ]0,1[,
    \qquad
    L(\gamma(t)) \leq (1-t) L(\covcenter) + t L(\MAT{H}),
\end{equation}
where $\gamma(\cdot)$ is the geodesic joining $\covcenter$ and $\MAT{H}$.
Furthermore, $L(\cdot)$ is strictly geodesically convex if the inequality is strict for $\covcenter\neq\MAT{H}$.
Geodesic convexity and strict geodesic convexity have several very interesting consequences.
If $L(\cdot)$ is geodesically convex, then each local minimum is a global minimizer on $\SPDman$.
If $L(\cdot)$ is further strictly geodesically convex, the uniqueness of the global minimizer is ensured (if it exists).
Strict geodesic convexity also yields that there exists a neighborhood $\mathcal{U}\subset\SPDman$ of the solution $\estimator$ such that, for every initialization $\covcenter_0\in\mathcal{U}$, Algorithm~\ref{algo:riem} converges to the solution.
Indeed, strict geodesic convexity implies that the Hessian is positive definite, which ensures convergence in a neighborhood~\cite[Theorem 6.3.2]{absil2008optimization}.

To ensure that the negative log-likelihood $L:\SPDman\to\mathds{R}$ defined in~\eqref{eq:log_lik} is geodesically convex -- respectively strictly geodesically convex --, Assumption~\ref{assump:geo_convex} -- respectively Assumption~\ref{assump:strict_geo_convex} -- is required.
Notice that these assumptions, which concern the density generator, are similar to the ones in the multivariate case~\cite{wiesel2011unified,ollila2014regularized}.
From there, geodesic convexity -- respectively strict geodesic convexity -- is established in Proposition~\ref{prop:geo_convex}.
%

\begin{assumption}
    \label{assump:geo_convex}
    $t\mapsto-\log\densitygenerator(t)$ is non-decreasing and $s\mapsto-\log\densitygenerator(e^s)$ is convex.
\end{assumption}

\begin{assumption}
    \label{assump:strict_geo_convex}
    $t\mapsto-\log\densitygenerator(t)$ is strictly increasing and $s\mapsto-\log\densitygenerator(e^s)$ is strictly convex.
\end{assumption}

\begin{remark}
    Notice that the condition $t\mapsto-\log\densitygenerator(t)$ non-decreasing is equivalent to $u:\mathds{R}^+\to\mathds{R}$ non-negative, which is included in Assumption~\ref{assump:exist_unique_1}.
    Furthermore, assuming that $u(\cdot)$ is differentiable, $s\mapsto-\log\densitygenerator(e^s)$ convex is equivalent to $\psi(t) = u(t) + tu'(t) \geq 0$ for $t>0$, \textit{i.e.}, $\psi$ non-decreasing, which is part of Assumption~\ref{assump:exist_unique_2}.
\end{remark}

\begin{proposition}[Geodesic convexity of the negative log-likelihood~\eqref{eq:log_lik}]
    \label{prop:geo_convex}
    Under Assumption~\ref{assump:geo_convex}, the negative log-likelihood~\eqref{eq:log_lik} of Elliptical Wishart distributions is geodesically convex.
    If Assumption~\ref{assump:strict_geo_convex} is satisfied, the negative log-likelihood~\eqref{eq:log_lik} is further strictly geodesically convex.
\end{proposition}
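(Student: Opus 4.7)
The plan is to split the negative log-likelihood into a log-det term and a density-generator term, show that both are geodesically convex under Assumption~\ref{assump:geo_convex}, and then refine the argument under Assumption~\ref{assump:strict_geo_convex} to obtain strict convexity. Throughout, I will use the geodesic parametrization $\gamma(t)=\covcenter^{1/2}\expm(t\MAT{A})\covcenter^{1/2}$ with $\MAT{A}=\covcenter^{-1/2}\tangent\covcenter^{-1/2}\in\symspace$, which is equivalent to the formula $\gamma(t)=\covcenter\expm(t\covcenter^{-1}\tangent)$ recalled in the excerpt.

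First, I would decompose $L=L_{1}+L_{2}$, with $L_{1}(\covcenter)=\tfrac{\nsamples\nmatrices}{2}\log\det(\covcenter)$ and $L_{2}(\covcenter)=-\sum_{k}\log\densitygenerator(\tr(\covcenter^{-1}\cov_{k}))$. For $L_{1}$, a direct computation using $\det(\expm(t\MAT{A}))=\exp(t\tr\MAT{A})$ shows that $t\mapsto L_{1}(\gamma(t))$ is affine in $t$, hence geodesically convex (and never strictly so). This is the easy part.

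The core of the proof is the geodesic convexity of each summand of $L_{2}$. Fix $k$ and let $\MAT{B}_{k}=\covcenter^{-1/2}\cov_{k}\covcenter^{-1/2}\in\SPDman$. Then
\begin{equation*}
\tr(\gamma(t)^{-1}\cov_{k})=\tr(\expm(-t\MAT{A})\MAT{B}_{k})=\sum_{i}c_{i}\,e^{-t\mu_{i}},
\end{equation*}
where $\{\mu_{i}\}$ are the eigenvalues of $\MAT{A}$ and $\{c_{i}\}$ are the diagonal entries of $\MAT{U}^{\top}\MAT{B}_{k}\MAT{U}$ in the eigenbasis of $\MAT{A}$; since $\MAT{B}_{k}$ is SPD, each $c_{i}>0$. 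A Cauchy--Schwarz argument on this sum of positive exponentials yields that $t\mapsto\log\tr(\gamma(t)^{-1}\cov_{k})$ is convex. Writing $\phi(s)=-\log\densitygenerator(e^{s})$, which is convex and non-decreasing by Assumption~\ref{assump:geo_convex} (non-decreasing because $-\log\densitygenerator$ is non-decreasing, so increasing $s$ only weakly increases $\phi$), the identity
\begin{equation*}
-\log\densitygenerator(\tr(\gamma(t)^{-1}\cov_{k}))=\phi\bigl(\log\tr(\gamma(t)^{-1}\cov_{k})\bigr)
\end{equation*}
expresses each summand of $L_{2}\circ\gamma$ as the composition of a convex non-decreasing function with a convex function, which is itself convex. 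Summing over $k$ and adding the affine $L_{1}\circ\gamma$ establishes geodesic convexity.

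For strict geodesic convexity under Assumption~\ref{assump:strict_geo_convex}, I would argue along any non-constant geodesic, i.e.\ $\MAT{A}\neq\MAT{0}$, by distinguishing two cases. If $\MAT{A}$ is not a scalar multiple of $\eye$, the exponents $\mu_{i}$ in $\sum_{i}c_{i}e^{-t\mu_{i}}$ are not all equal while every $c_{i}>0$, so the Cauchy--Schwarz step becomes strict and $\log\tr(\gamma(t)^{-1}\cov_{k})$ is strictly convex; composing with the now non-decreasing convex $\phi$ preserves strictness (using that $\phi$ is also strictly convex, or simply non-decreasing with strict convexity of the inner function plus $-\log\densitygenerator$ strictly increasing). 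If instead $\MAT{A}=\lambda\eye$ with $\lambda\neq 0$, then $\log\tr(\gamma(t)^{-1}\cov_{k})=-\lambda t+\log\tr(\covcenter^{-1}\cov_{k})$ is affine with non-zero slope, and strict convexity of $\phi$ on an affine non-constant argument gives strict convexity of the composition. In either case, $L_{2}\circ\gamma$ is strictly convex and, added to the affine $L_{1}\circ\gamma$, yields strict geodesic convexity of $L$.

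The main obstacle is the log-convexity of $t\mapsto\tr(\expm(-t\MAT{A})\MAT{B}_{k})$: the non-commutativity of $\MAT{A}$ and $\MAT{B}_{k}$ prevents a naive eigendecomposition, but passing through the eigenbasis of $\MAT{A}$ and exploiting positivity of the diagonal entries of the transformed $\MAT{B}_{k}$ reduces the statement to log-convexity of a positive combination of exponentials, and identifying exactly when strictness fails (namely $\MAT{A}\propto\eye$) is what drives the case split in the strict convexity argument.
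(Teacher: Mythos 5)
Your proof is correct and follows essentially the same route as the paper's (and the standard one from the multivariate elliptical literature the paper cites): $\log\det$ is affine along geodesics, $t\mapsto\log\tr(\gamma(t)^{-1}\cov_k)$ is convex by reducing $\tr(\expm(-t\MAT{A})\MAT{B}_k)$ to a positive combination of exponentials and applying Cauchy--Schwarz, and the composition with the non-decreasing convex $\phi(s)=-\log\densitygenerator(e^s)$ gives geodesic convexity, with the SPD-ness of the samples guaranteeing $c_i>0$ so that strictness fails only when $\MAT{A}\propto\eye$, which your case split handles correctly. No gaps.
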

\begin{proof}
    See supplementary materials.
\end{proof}

\section{Performance analysis}
From Section~\ref{sec:mle}, we have obtained the maximum likelihood estimator of Elliptical Wishart distributions, \textit{i.e.}, two algorithms to compute it are derived, its existence and uniqueness are proven, and the convergence of proposed algorithms is determined.
To better characterize this maximum likelihood estimator, it remains to analyze its performance from a statistical point of view.
This is the objective of the present section.
Specifically, we derive its expectation and variance, its consistency, and its asymptotic normality.
We further obtain a new property closely related to the so-called intrinsic Cramér-Rao bound~\cite{smith2005covariance,boumal2013intrinsic,breloy2018intrinsic} that we call intrinsic Fisher efficiency.
In this section, the maximum likelihood estimator of $K$ samples $\{\cov_k\}_{k=1}^\nmatrices$ is denoted $\estimator_{\nmatrices}$.
Throughout the section, it is assumed that Assumptions~\ref{assump:exist_unique_1}, \ref{assump:exist_unique_2} and~\ref{assump:exist_unique_3} ensuring the existence and uniqueness of this maximum likelihood estimator hold.

Our first task is, given $\nmatrices\in\mathds{N}$, to find the expectation and variance of the maximum likelihood estimator $\estimator_{\nmatrices}$ with respect to all possible realizations of the $\nmatrices$ samples $\{\cov_k\}_{k=1}^\nmatrices$.
To achieve this, one must first verify that the maximum likelihood estimator possesses the invariance property of the Elliptical Wishart model with respect to affine transformations.
This is done in Proposition~\ref{prop:mle_invariant}.
The expectation and variance of the maximum likelihood estimator $\estimator_{\nmatrices}$ are then provided in Proposition~\ref{prop:mle_expec_var}.

\begin{proposition}[invariance of $\estimator_{\nmatrices}$ w.r.t. affine transformations]
    \label{prop:mle_invariant}
    For any non-singular matrix $\MAT{A}\in\mathds{R}^{\nfeatures\times\nfeatures}$, if $\estimator_{\nmatrices}$ is the maximum likelihood estimator of $\{\cov_k\}_{k=1}^\nmatrices$, then the one of $\{\MAT{A}\cov_k\MAT{A}^\top\}_{k=1}^\nmatrices$ is $\MAT{A}\estimator_{\nmatrices}\MAT{A}^\top$.
\end{proposition}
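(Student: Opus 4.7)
The plan is to show that the negative log-likelihood transforms covariantly under the action $\covcenter \mapsto \MAT{A}\covcenter\MAT{A}^\top$, up to an additive constant that does not depend on $\covcenter$. Since minimizing $L$ and minimizing $L + \text{const}$ yield the same argmin, and since $\covcenter \mapsto \MAT{A}\covcenter\MAT{A}^\top$ is a bijection of $\SPDman$ onto itself for nonsingular $\MAT{A}$, this immediately identifies the MLE of the transformed samples as $\MAT{A}\estimator_{\nmatrices}\MAT{A}^\top$.

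Concretely, I would first denote by $L_{\{\cov_k\}}$ and $L_{\{\MAT{A}\cov_k\MAT{A}^\top\}}$ the negative log-likelihoods in~\eqref{eq:log_lik} for the original and transformed samples, respectively. Substituting $\covcenter' = \MAT{A}\covcenter\MAT{A}^\top$ into $L_{\{\MAT{A}\cov_k\MAT{A}^\top\}}(\covcenter')$ and using the two elementary identities
\begin{equation*}
\log\det(\MAT{A}\covcenter\MAT{A}^\top) = \log\det(\covcenter) + 2\log|\det(\MAT{A})|,
\qquad
\tr\!\bigl((\MAT{A}\covcenter\MAT{A}^\top)^{-1}\MAT{A}\cov_k\MAT{A}^\top\bigr) = \tr(\covcenter^{-1}\cov_k),
\end{equation*}
yields
\begin{equation*}
L_{\{\MAT{A}\cov_k\MAT{A}^\top\}}(\MAT{A}\covcenter\MAT{A}^\top) = L_{\{\cov_k\}}(\covcenter) + \nsamples\nmatrices\log|\det(\MAT{A})|.
\end{equation*}
Since the additive term does not depend on $\covcenter$, the two objective functions have the same argmin up to the change of variable $\covcenter \leftrightarrow \MAT{A}\covcenter\MAT{A}^\top$. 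Conclude by invoking the existence and uniqueness of the MLE from Proposition~\ref{prop:exist_unique}.

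As an alternative, one can equivalently verify the claim by plugging $\MAT{A}\estimator_{\nmatrices}\MAT{A}^\top$ into the fixed-point equation~\eqref{eq:fixed_point} written for the transformed samples: the trace argument of $u(\cdot)$ reduces to $\tr(\estimator_{\nmatrices}^{-1}\cov_k)$ by the same identity, and one factors $\MAT{A}$ and $\MAT{A}^\top$ on the left and right of the resulting sum, recovering exactly $\MAT{A}\estimator_{\nmatrices}\MAT{A}^\top$ by applying the fixed-point equation for $\estimator_{\nmatrices}$. Uniqueness then identifies this as the MLE. I expect no real obstacle here; the argument is essentially bookkeeping of trace and determinant identities, and the only subtlety is making sure to invoke Proposition~\ref{prop:exist_unique} to justify that the computed point is \emph{the} MLE rather than merely \emph{a} critical point of the likelihood.
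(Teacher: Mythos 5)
Your proposal is correct, and your ``alternative'' argument---substituting $\MAT{A}\estimator_{\nmatrices}\MAT{A}^\top$ into the fixed-point equation~\eqref{eq:fixed_point} for the transformed samples and invoking uniqueness---is exactly the paper's proof. Your primary argument via the equivariance of the negative log-likelihood (with the additive constant $\nsamples\nmatrices\log|\det(\MAT{A})|$) is an equally valid, essentially equivalent route, so there is nothing to fix.
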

\begin{proof}
    This is a direct consequence of the fixed-point equation~\eqref{eq:fixed_point} and the uniqueness of the maximum likelihood estimator.
\end{proof}

\begin{proposition}[Expectation and variance of $\estimator_{\nmatrices}$]
    \label{prop:mle_expec_var}
    There exists $\mu>0$, $\sigma_1>0$ and $\sigma_2\geq-2\sigma_1/\nfeatures$ such that
    \begin{equation*}
        \begin{array}{rcl}
             \mathds{E}[\estimator_\nmatrices] & = & \mu\covcenter \\
             \mathrm{var}(\estimator_\nmatrices) & = & \sigma_1 (\eye+\MAT{K}_{\nfeatures\nfeatures})(\covcenter\otimes\covcenter) + \sigma_2\vvec(\covcenter)\vvec(\covcenter)^\top,
        \end{array}
    \end{equation*}
    where $\MAT{K}_{\nfeatures\nfeatures}$ denotes the commutation matrix.
\end{proposition}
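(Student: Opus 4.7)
The plan is to exploit Proposition~\ref{prop:mle_invariant} to reduce to the case $\covcenter = \eye$, and then pin down both moments through orthogonal symmetry. First I would whiten the samples: using the stochastic representation~\eqref{eq:stoch_rep}, the transformed samples $\widetilde{\cov}_k = \covcenter^{-1/2}\cov_k\covcenter^{-1/2}$ are i.i.d.\ from $\EWishart{\eye}$, and Proposition~\ref{prop:mle_invariant} applied with $\MAT{A} = \covcenter^{-1/2}$ yields that the corresponding MLE is $\widetilde{\estimator}_\nmatrices = \covcenter^{-1/2}\estimator_\nmatrices\covcenter^{-1/2}$. It therefore suffices to compute $\mathds{E}[\widetilde{\estimator}_\nmatrices]$ and $\mathrm{var}(\widetilde{\estimator}_\nmatrices)$ and pull them back via $\estimator_\nmatrices = \covcenter^{1/2}\widetilde{\estimator}_\nmatrices\covcenter^{1/2}$.

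Second, I would observe that $\EWishart{\eye}$ is orthogonally invariant: inspecting Definition~\ref{def:EW}, its density depends on $\widetilde{\cov}$ only through $\det(\widetilde{\cov})$ and $\tr(\widetilde{\cov})$, both invariant under $\widetilde{\cov} \mapsto \MAT{O}\widetilde{\cov}\MAT{O}^\top$ for any orthogonal $\MAT{O}$. Combined with Proposition~\ref{prop:mle_invariant}, this gives the distributional equality $\MAT{O}\widetilde{\estimator}_\nmatrices\MAT{O}^\top \stackrel{d}{=} \widetilde{\estimator}_\nmatrices$ for every orthogonal $\MAT{O}$. Taking expectations, $\mathds{E}[\widetilde{\estimator}_\nmatrices]$ commutes with every orthogonal conjugation and must therefore be a scalar multiple of $\eye$, say $\mu\eye$. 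Pulling back yields $\mathds{E}[\estimator_\nmatrices] = \mu\covcenter$; positivity $\mu > 0$ follows because $\estimator_\nmatrices\in\SPDman$ almost surely, so $\mathds{E}[\estimator_\nmatrices]$ is itself SPD.

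For the variance, the invariance translates, via $\vvec(\MAT{O}\widetilde{\estimator}_\nmatrices\MAT{O}^\top) = (\MAT{O}\otimes\MAT{O})\vvec(\widetilde{\estimator}_\nmatrices)$, into $(\MAT{O}\otimes\MAT{O})\mathrm{var}(\widetilde{\estimator}_\nmatrices)(\MAT{O}\otimes\MAT{O})^\top = \mathrm{var}(\widetilde{\estimator}_\nmatrices)$ for every orthogonal $\MAT{O}$. The $O(\nfeatures)$-representation on $\symspace$ decomposes into the trivial piece spanned by $\eye$ and the trace-free symmetric irreducible; Schur's lemma then implies that the space of symmetric orthogonal-invariant operators on $\symspace$ is two-dimensional, and a direct check identifies $\eye + \MAT{K}_{\nfeatures\nfeatures}$ and $\vvec(\eye)\vvec(\eye)^\top$ as a basis. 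Hence $\mathrm{var}(\widetilde{\estimator}_\nmatrices) = \sigma_1(\eye + \MAT{K}_{\nfeatures\nfeatures}) + \sigma_2\vvec(\eye)\vvec(\eye)^\top$. Pulling back with $(\covcenter^{1/2}\otimes\covcenter^{1/2})\MAT{K}_{\nfeatures\nfeatures} = \MAT{K}_{\nfeatures\nfeatures}(\covcenter^{1/2}\otimes\covcenter^{1/2})$ and $(\covcenter^{1/2}\otimes\covcenter^{1/2})\vvec(\eye) = \vvec(\covcenter)$ then delivers the announced formula.

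The sign constraints come from positive semi-definiteness of $\mathrm{var}(\widetilde{\estimator}_\nmatrices)$: for any trace-free $\tangent\in\symspace$, $\vvec(\tangent)^\top\mathrm{var}(\widetilde{\estimator}_\nmatrices)\vvec(\tangent) = 2\sigma_1\|\tangent\|^2$, which forces $\sigma_1 \geq 0$, and strict positivity $\sigma_1 > 0$ follows from non-degeneracy of $\widetilde{\estimator}_\nmatrices$ in the trace-free direction. Testing against $\vvec(\eye)$, using $(\eye+\MAT{K}_{\nfeatures\nfeatures})\vvec(\eye) = 2\vvec(\eye)$ and $\|\vvec(\eye)\|^2 = \nfeatures$, yields $2\nfeatures\sigma_1 + \nfeatures^2\sigma_2 \geq 0$, i.e., $\sigma_2 \geq -2\sigma_1/\nfeatures$. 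The main obstacle will be the representation-theoretic step that reduces orthogonal-invariant covariance structures on $\symspace$ to the two-dimensional family above; this can alternatively be handled by a direct Haar-average computation over $O(\nfeatures)$ if Schur's lemma feels too heavy, but either way it is the only non-routine piece of the argument.
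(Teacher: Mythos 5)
Your proposal is correct and follows essentially the same route as the paper: reduce to $\covcenter=\eye$ via the affine equivariance of Proposition~\ref{prop:mle_invariant}, observe that the MLE is then orthogonally invariant in distribution, and deduce the stated form of the mean and variance from the characterization of orthogonally invariant random symmetric matrices. The only difference is that the paper simply invokes this characterization from \cite[Proposition 13.6]{bilodeau1999theory}, whereas you re-derive it via Schur's lemma; the derivation is sound, as are your pullback identities and the sign constraints obtained by testing the covariance against trace-free directions and $\vvec(\eye)$.
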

\begin{proof}
    See supplementary materials.
    Notice that this proof exploits~\cite[Proposition 13.6]{bilodeau1999theory}.
\end{proof}
%

Notice that for the Wishart distribution, it can be shown that the Wishart estimator $\estimator_{\textup{W}}$ given by~\eqref{eq:mle_Wishart} follows the Wishart distribution $\mathcal{W}(\frac1{\nsamples\nmatrices}\covcenter, \nsamples\nmatrices)$.
Indeed, this stems from the fact that if $\cov_k\sim\Wishart{\covcenter}$, then $\sum_{k=1}^\nmatrices\cov_k\sim\mathcal{W}(\nsamples\nmatrices,\covcenter)$~\cite{bilodeau1999theory}.
The expectation and variance of the Wishart distribution are well known~\cite{bilodeau1999theory}.
Hence, $\estimator_{\textup{W}}$ is unbiased, \textit{i.e.}, $\mu=1$, and its variance is obtained for $\sigma_1=\frac{1}{\nsamples\nmatrices}$ and $\sigma_2=0$.

Now, our focus is to derive asymptotic properties of the maximum likelihood estimator $\estimator_\nmatrices$ as the number of matrices $\nmatrices$ grows toward infinity.
In the present work, we first investigate two crucial properties when conducting the theoretical study of an estimator: consistency and asymptotic normality.
Consistency is indeed tremendous as it ensures that when the number of available samples goes to infinity, the estimator actually tends to the true parameter.
Asymptotic normality is also very important as it shows that the estimator asymptotically follows a Normal distribution as $\nmatrices\to+\infty$.
The consistency is derived in Proposition~\ref{prop:consistency}, while the asymptotic normality is provided in Proposition~\ref{prop:asymp_norm}.

\begin{proposition}[Consistency of $\estimator_{\nmatrices}$]
    \label{prop:consistency}
    The maximum likelihood estimator $\estimator_{\nmatrices}$ converges almost surely to the true parameter $\covcenter$ when $\nmatrices\to+\infty$.
\end{proposition}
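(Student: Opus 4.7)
The plan is to follow the classical strategy for proving consistency of $M$-estimators, adapted to the manifold $\SPDman$. Denote the true parameter by $\covcenter_\star$ and expectations under $\cov\sim\EWishart{\covcenter_\star}$ by $\mathds{E}_\star$. I would introduce the population objective
$$\bar L(\covcenter) := \frac{\nsamples}{2}\log\det(\covcenter) - \mathds{E}_\star[\log \densitygenerator(\tr(\covcenter^{-1}\cov))],$$
so that $\frac{1}{\nmatrices} L(\covcenter) \to \bar L(\covcenter)$ almost surely by the strong law of large numbers for each fixed $\covcenter$. Similarly, the empirical fixed-point map from Algorithm~\ref{algo:fixed_point} converges pointwise a.s. to its population version $F(\covcenter) := \frac{1}{\nsamples}\mathds{E}_\star[u(\tr(\covcenter^{-1}\cov))\cov]$, and differentiating under the expectation (Fisher's identity) shows that $\covcenter_\star$ is a fixed point of $F$.

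Next, I would identify $\covcenter_\star$ as the unique minimizer of $\bar L$. Up to an additive constant, $\bar L(\covcenter) - \bar L(\covcenter_\star)$ is the Kullback-Leibler divergence from $\EWishart{\covcenter_\star}$ to $\EWishart{\covcenter}$, which is non-negative and vanishes only at $\covcenter_\star$ by identifiability of the model. Strict positivity away from $\covcenter_\star$ is reinforced by Proposition~\ref{prop:geo_convex}: under Assumption~\ref{assump:strict_geo_convex}, $L(\covcenter)$ is strictly geodesically convex for each realization of the samples, and this property is inherited by $\bar L$ in the limit.

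The third step is to confine $\estimator_\nmatrices$ in a Riemannian-compact region eventually. Here I need coercivity of $\bar L$: as $\covcenter$ escapes compact subsets of $\SPDman$ (i.e., the smallest eigenvalue tends to $0$ or the largest to $+\infty$), $\bar L(\covcenter)\to+\infty$. This follows from the $\log\det$ term together with the monotonicity and non-negativity of $u$ from Assumptions~\ref{assump:exist_unique_1} and~\ref{assump:exist_unique_2}. Using affine invariance (Proposition~\ref{prop:mle_invariant}) one can first reduce to $\covcenter_\star=\eye$, and then transfer coercivity to $L/\nmatrices$ uniformly on the boundary of a large geodesic ball around $\eye$, using the SLLN on a finite covering. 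This ensures with probability one that $\estimator_\nmatrices$ lies in this ball for all $\nmatrices$ sufficiently large.

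Concluding, combining uniform convergence of $L/\nmatrices$ to $\bar L$ on a compact geodesic neighborhood (obtained via dominated convergence from the continuity of $\densitygenerator$) with the unique minimization of $\bar L$ at $\covcenter_\star$ yields $\estimator_\nmatrices\to\covcenter_\star$ almost surely, via a standard argmin-continuity argument. The main obstacle is the confinement step: $\SPDman$ is open and non-compact, so one must rule out that $\estimator_\nmatrices$ drifts toward the boundary. This is the most delicate piece, and the strict monotonicity of $\psi$ from Assumption~\ref{assump:exist_unique_3}, already exploited in Proposition~\ref{prop:exist_unique}, is what makes this confinement work quantitatively.
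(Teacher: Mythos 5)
Your argument is essentially sound, but it follows a genuinely different route from the paper. The paper's proof works with the estimating equation rather than the criterion function: it applies \cite[Theorem~2]{huber1967behavior}, which gives almost-sure convergence of any sequence solving the empirical score (fixed-point) equation $\frac1\nmatrices\sum_k \grad_\ambient \log f(\cov_k|\covcenter)=\MAT{0}$ to the unique zero of the population score, once Huber's regularity conditions (separability, a lower bound on $\|\mathds{E}[\psi]\|$ away from $\covcenter$, and control at the boundary of $\SPDman$) are verified. Your proposal is instead a Wald-type argument: pointwise SLLN for $L/\nmatrices$, identification of $\covcenter$ as the unique minimizer of $\bar L$ via the Kullback--Leibler divergence, coercivity to confine $\estimator_\nmatrices$ in a compact set, and a uniform LLN plus argmin-continuity. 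The trade-off is the usual one: Huber's theorem packages the boundary-control and uniformity issues into checkable conditions on $\mathds{E}[\psi]$, whereas your route must establish coercivity and uniform convergence by hand but needs no differentiability of the criterion. Two soft spots in your write-up deserve attention: (i) strict geodesic convexity is \emph{not} inherited by pointwise limits, so that remark cannot carry the identification step --- fortunately the KL argument does, provided you actually prove injectivity of $\covcenter\mapsto\EWishart{\covcenter}$ (e.g.\ via the stochastic representation~\eqref{eq:stoch_rep}) and the integrability $\mathds{E}_\star|\log\densitygenerator(\tr(\covcenter^{-1}\cov))|<\infty$; and (ii) coercivity as $\lambda_{\min}(\covcenter)\to0$ rests on $\sup_t\psi(t)>\nsamples\nfeatures$ from Assumption~\ref{assump:exist_unique_2} (which makes $-\log\densitygenerator(t)$ grow faster than $\tfrac{\nsamples\nfeatures}{2}\log t$), not on the strict monotonicity of Assumption~\ref{assump:exist_unique_3}, whose role is uniqueness. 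With those two points repaired, your proof would stand as a valid alternative to the paper's.
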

\begin{proof}
    See supplementary materials.
    Notice that this proof relies on~\cite[Theorem 2]{huber1967behavior}.
\end{proof}

In order to be able to derive the asymptotic normality of the maximum likelihood estimator $\estimator_{\nmatrices}$, one further needs an additional assumption, which is provided in Assumption~\ref{assump:deriv_psi_bound}.
It states that the derivative of $\psi:\mathds{R}^{+}\to\mathds{R}$ needs to be bounded.
For the Wishart distribution, since $u(t)=1$, we have $\psi(t)=t$ and $\psi'(t)=1$.
Assumption~\ref{assump:deriv_psi_bound} is thus verified.
For the $t$-Wishart distribution, we have $u(t)=\frac{\dof+\nsamples\nfeatures}{\dof+t}$ (see Table~\ref{tab:EW_ex}).
Hence, $\psi(t)=\frac{\dof+\nsamples\nfeatures}{\dof+t}t$ and $\psi'(t)=\frac{\dof(\dof+\nsamples\nfeatures)}{(\dof+t)^2}$.
Since $\dof>0$, $\psi'(\cdot)$ is also bounded on $\mathds{R}^{+}$ in this case.

\begin{assumption}
    \label{assump:deriv_psi_bound}
    There exists $B>0$ such that the derivative of $\psi:\mathds{R}^+\to\mathds{R}$ defined in Section~\ref{subsec:mle:exist_unique} is upper bounded by $B$, \textit{i.e.}, for all $t\in\mathds{R}^+$, $\psi'(t)=u(t)+tu'(t)\leq B$.
\end{assumption}

\begin{proposition}[Asymptotic normality of $\estimator_{\nmatrices}$]
    \label{prop:asymp_norm}
    Under Assumption~\ref{assump:deriv_psi_bound}, one has
    \begin{equation*}
        \sqrt{\nmatrices}\vvec(\estimator_{\nmatrices}-\covcenter)
        \;
        \overset{(d)}{\underset{\nmatrices\to+\infty}{\longrightarrow}}
        \;
        \Normal{\MAT{\Delta}},
    \end{equation*}
    where $\MAT{\Delta}=\sigma_{1,\infty}(\MAT{I}_{\nfeatures^2}+\MAT{K}_{\nfeatures\nfeatures})(\covcenter\otimes\covcenter) + \sigma_{2,\infty}\vvec(\covcenter)\vvec(\covcenter)^\top$.
    We have $\sigma_{1,\infty}=\frac1{2\alpha}$ and $\sigma_{2,\infty}=-\frac{\beta}{\alpha(\alpha+\nfeatures\beta)}$, where $\alpha$ and $\beta$ are defined in Proposition~\ref{prop:fim}.
\end{proposition}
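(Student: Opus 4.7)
The plan is to follow the standard Huber-style linearisation of the estimating equation around the true parameter $\covcenter$, and to match the limiting covariance to the inverse Fisher information encoded by Proposition~\ref{prop:fim}. First I would rewrite the stationarity condition $\grad_{\ambient} L(\estimator_\nmatrices)=\MAT{0}$ (Proposition~\ref{prop:eucl_grad}) as $\Phi_\nmatrices(\estimator_\nmatrices)=\MAT{0}$ with
\begin{equation*}
    \Phi_\nmatrices(\MAT{H}) = \frac{1}{\nmatrices}\sum_{k=1}^{\nmatrices}\bigl[\nsamples\MAT{H} - u(\tr(\MAT{H}^{-1}\cov_k))\cov_k\bigr].
\end{equation*}
The population identity $\mathds{E}[\Phi_\nmatrices(\covcenter)]=\MAT{0}$ is the score identity (differentiating the pdf normalisation in Definition~\ref{def:EW}, or equivalently using the stochastic representation~\eqref{eq:stoch_rep}). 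A Taylor expansion then yields $\MAT{0}=\Phi_\nmatrices(\covcenter) + D\Phi_\nmatrices(\widetilde{\covcenter}_\nmatrices)[\estimator_\nmatrices-\covcenter]$ for some $\widetilde{\covcenter}_\nmatrices$ on the segment from $\covcenter$ to $\estimator_\nmatrices$; by Proposition~\ref{prop:consistency}, $\widetilde{\covcenter}_\nmatrices\to\covcenter$ almost surely.

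Second, the multivariate CLT applied to $\sqrt{\nmatrices}\,\vvec(\Phi_\nmatrices(\covcenter))$ gives convergence to $\Normal{V}$, where $V$ is the covariance of a single centred summand. Here Assumption~\ref{assump:deriv_psi_bound} controls the second moments of $\psi(\mathcal{Q})=\mathcal{Q}u(\mathcal{Q})$. To evaluate $V$ I would substitute the stochastic representation~\eqref{eq:stoch_rep} $\cov_k=\mathcal{Q}_k\covcenter^{1/2}\MAT{U}_k\MAT{U}_k^\top\covcenter^{1/2}$, using the Haar moment identities for $\MAT{U}_k$ (which produce the $(\eye+\MAT{K}_{\nfeatures\nfeatures})(\covcenter\otimes\covcenter)$ and $\vvec(\covcenter)\vvec(\covcenter)^\top$ structure) and the integration-by-parts identity from the remark below Proposition~\ref{prop:fim} (to express the resulting $\mathcal{Q}$-moments in terms of $\alpha,\beta$). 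In parallel, by the strong law of large numbers and continuous mapping, $D\Phi_\nmatrices(\widetilde{\covcenter}_\nmatrices)\to\mathcal{M}$ almost surely with $\mathcal{M}=\alpha(\covcenter^{-1}\otimes\covcenter^{-1}) + \beta\,\vvec(\covcenter^{-1})\vvec(\covcenter^{-1})^\top$ on $\vvec(\symspace)$; this matrix is exactly the Fisher information, since the metric in Proposition~\ref{prop:fim} was defined as $\mathds{E}[\diff^2 L]$. The information equality thus gives $V=\mathcal{M}$, and Slutsky's lemma yields $\sqrt{\nmatrices}\,\vvec(\estimator_\nmatrices-\covcenter)\overset{(d)}{\to}\Normal{\mathcal{M}^{-1}}$.

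It remains to identify $\mathcal{M}^{-1}$ with the stated $\MAT{\Delta}$. This is precisely the inverse-Fisher-metric map encoded by Proposition~\ref{prop:egrad2rgrad}; using the identities $(\covcenter^{-1}\otimes\covcenter^{-1})\vvec(\covcenter)=\vvec(\covcenter^{-1})$ and $\vvec(\covcenter^{-1})^\top\vvec(\covcenter)=\nfeatures$, a direct computation verifies $\mathcal{M}\MAT{\Delta}=\tfrac12(\MAT{I}_{\nfeatures^2}+\MAT{K}_{\nfeatures\nfeatures})$, which is the identity on $\vvec(\symspace)$; this pins down $\sigma_{1,\infty}=\tfrac{1}{2\alpha}$ and $\sigma_{2,\infty}=-\tfrac{\beta}{\alpha(\alpha+\nfeatures\beta)}$. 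The main obstacle is technical rather than conceptual: justifying the a.s.\ convergence $D\Phi_\nmatrices(\widetilde{\covcenter}_\nmatrices)\to\mathcal{M}$ uniformly in a neighbourhood of $\covcenter$ (so that consistency of $\widetilde{\covcenter}_\nmatrices$ transfers to the Jacobian), and the moment book-keeping used to derive $V$; both rest on the dominating envelope provided by Assumption~\ref{assump:deriv_psi_bound}.
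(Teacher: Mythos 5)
Your proposal takes essentially the same route as the paper: asymptotic normality via the corollary of Huber's Theorem 3 for M-estimators, i.e.\ linearising the estimating equation around $\covcenter$, applying the CLT to the centred score (whose covariance is computed from the stochastic representation and Haar moments), letting the Jacobian converge to the expected Hessian, and collapsing the sandwich to the inverse Fisher information, with Assumption~\ref{assump:deriv_psi_bound} supplying the regularity needed for the uniform control of the Jacobian. One bookkeeping caveat: with your un-normalised $\Phi_\nmatrices$, the limit of $D\Phi_\nmatrices$ and the summand covariance $V$ are \emph{not} the matrix $\mathcal{M}=\alpha(\covcenter^{-1}\otimes\covcenter^{-1})+\beta\vvec(\covcenter^{-1})\vvec(\covcenter^{-1})^\top$ but that matrix conjugated by $\covcenter\otimes\covcenter$ (and rescaled) -- the identities $V=\mathcal{M}$ and $\lim D\Phi_\nmatrices=\mathcal{M}$ hold verbatim only for the per-sample Euclidean score $\grad_{\ambient}\ell_k$ -- yet the sandwich $(\lim D\Phi_\nmatrices)^{-1}V(\lim D\Phi_\nmatrices)^{-1}$ is invariant under this reparametrisation, so your final identification of $\MAT{\Delta}$ with $\sigma_{1,\infty}=\tfrac{1}{2\alpha}$ and $\sigma_{2,\infty}=-\tfrac{\beta}{\alpha(\alpha+\nfeatures\beta)}$ is correct.
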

\begin{proof}
    See supplementary materials.
    Notice that this proof relies on the corollary of~\cite[Theorem 3]{huber1967behavior}.
\end{proof}

In the particular case of the Wishart distribution, $\sigma_{1,\infty}^{\textup{W}}=\frac1\nsamples$ and $\sigma_{2,\infty}^{\textup{W}}=0$.
Notice that this result can also be obtained by a direct application of the central limit theorem.
For the $t$-Wishart distribution, one has $\sigma_{1,\infty}^{t\textup{-W}}=\frac1\nsamples(1+\frac2{\dof+\nsamples\nfeatures})$ and $\sigma_{2,\infty}^{t\textup{-W}}=-\frac2\dof(1+\frac2{\dof+\nsamples\nfeatures})$.

When it comes to analyzing the performance of an estimator with respect to a statistical model, a very important object is the Cramér-Rao bound (CRB).
Indeed, it allows to provide a lower bound of the achievable error of an (optimal) estimator.
In the classical setting and its scalar form, the Cramér-Rao bound allows to provide a lower bound on the mean square error.
Given some estimator $\estimator_{\nmatrices}$ of a true parameter $\covcenter$, it is typically
\begin{equation*}
    \mathds{E}[\| \estimator_{\nmatrices} - \covcenter \|_2^2] \geq \frac1\nmatrices \tr(\MAT{F}_{\covcenter}^{-1}),
\end{equation*}
where $\MAT{F}_{\covcenter}$ is the Fisher information matrix (which can be obtained from the Fisher metric).
An estimator that actually reaches this Cramér-Rao bound is said to be Fisher-efficient.
When the parameter to estimate belongs to a manifold, a generalization of the Cramér-Rao bound is available: the so-called intrinsic Cramér-Rao bound~\cite{smith2005covariance,boumal2013intrinsic,breloy2018intrinsic}.
In this case, the error is no longer measured with the Euclidean distance but rather with a Riemannian distance $\delta(\cdot,\cdot)$ on the manifold.
The definition of the Fisher information matrix also changes a little and some curvature terms appear in the inequality.
The latter are however usually neglected.
The intrinsic Cramér-Rao bound then becomes
\begin{equation*}
    \mathds{E}[\delta^2(\estimator_{\nmatrices},\covcenter)] \geq \frac1\nmatrices \tr(\MAT{F}_{\covcenter}^{-1}).
\end{equation*}
In our case, $\delta^2(\cdot,\cdot)$ defined in~\eqref{eq:dist} corresponds to the Fisher distance of the statistical model we are interested in: Elliptical Wishart distributions.
In such a case, the Fisher information matrix is very simple: it is $\MAT{I}_d$, where $d$ is the dimension of the manifold, $\frac{\nfeatures(\nfeatures+1)}{2}$ for us.
Hence, the intrinsic Cramér-Rao bound of Elliptical Wishart distributions on $\SPDman$ for the Fisher distance~\eqref{eq:dist} is, given an estimator $\estimator_{\nmatrices}$ of the true parameter $\covcenter\in\SPDman$,
\begin{equation}
    \mathds{E}[\delta^2(\estimator_\nmatrices,\covcenter)] \geq \frac{\nfeatures(\nfeatures+1)}{2\nmatrices}.
    \label{eq:icrb}
\end{equation}
In the following proposition (Proposition~\ref{prop:fisher_efficient}), we aim to generalize the notion of Fisher efficiency to this intrinsic setting and show that the maximum likelihood estimator $\estimator_{\nmatrices}$ actually reaches the intrinsic Cramér-Rao bound.

\begin{proposition}[Intrinsic Fisher efficiency of $\estimator_{\nmatrices}$]
    \label{prop:fisher_efficient}
    The maximum likelihood estimator $\estimator_{\nmatrices}$ of the Elliptical Wishart distribution with true parameter $\covcenter\in\SPDman$ is such that
    \begin{equation*}
        \nmatrices \delta^2(\estimator_{\nmatrices},\covcenter)
        \;
        \overset{(d)}{\underset{\nmatrices\to+\infty}{\longrightarrow}}
        \;
        \chi^2 \left( \frac{\nfeatures(\nfeatures+1)}{2} \right).
    \end{equation*}
    Consequently,
    \begin{equation*}
        \nmatrices \mathds{E}[\delta^2(\estimator_{\nmatrices},\covcenter)]
        \;
        \underset{\nmatrices\to+\infty}{\longrightarrow}
        \;
        \frac{\nfeatures(\nfeatures+1)}{2}.
    \end{equation*}
\end{proposition}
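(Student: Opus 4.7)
The plan is to reduce the statement to the classical result that a quadratic form $\VEC{Y}^\top\MAT{F}\VEC{Y}$ in a centered Gaussian vector $\VEC{Y}\sim\Normal{\MAT{\Delta}}$ is $\chi^2$-distributed of rank $r$ whenever $\MAT{F}\MAT{\Delta}$ is an orthogonal projector of trace $r$. The starting point is a second-order Taylor expansion of the Fisher-Rao distance~\eqref{eq:dist} around $\covcenter$. Setting $\tangent = \estimator_{\nmatrices}-\covcenter\in\symspace$ and using $\logm(\eye+\MAT{A}) = \MAT{A}+O(\|\MAT{A}\|^2)$ together with $\log\det(\eye+\MAT{A})=\tr(\MAT{A})+O(\|\MAT{A}\|^2)$, a direct expansion of~\eqref{eq:dist} gives $\delta^2(\covcenter+\tangent,\covcenter) = \alpha\tr(\covcenter^{-1}\tangent\covcenter^{-1}\tangent)+\beta\tr(\covcenter^{-1}\tangent)^2 + o(\|\tangent\|^2) = \vvec(\tangent)^\top\MAT{F}_{\covcenter}\vvec(\tangent)+o(\|\tangent\|^2)$, where $\MAT{F}_{\covcenter} := \alpha(\covcenter^{-1}\otimes\covcenter^{-1})+\beta\vvec(\covcenter^{-1})\vvec(\covcenter^{-1})^\top$ is the Fisher metric of Proposition~\ref{prop:fim} in vectorized form. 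In other words, the Fisher-Rao distance squared agrees to leading order with the Fisher quadratic form, as expected.

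Combining this expansion with Proposition~\ref{prop:asymp_norm} through Slutsky's lemma and the continuous mapping theorem, the random variable $\nmatrices\delta^2(\estimator_{\nmatrices},\covcenter)$ converges in distribution to $\VEC{Y}^\top\MAT{F}_{\covcenter}\VEC{Y}$ with $\VEC{Y}\sim\Normal{\MAT{\Delta}}$. The core computation then consists in checking that $\MAT{F}_{\covcenter}\MAT{\Delta}$ equals $\tfrac12(\eye_{\nfeatures^2}+\MAT{K}_{\nfeatures\nfeatures})$, the orthogonal projector onto vectorized symmetric matrices. Expanding the product yields four terms; the Kronecker identities $(\covcenter^{-1}\otimes\covcenter^{-1})(\covcenter\otimes\covcenter)=\eye_{\nfeatures^2}$, $\MAT{K}_{\nfeatures\nfeatures}(\covcenter\otimes\covcenter)=(\covcenter\otimes\covcenter)\MAT{K}_{\nfeatures\nfeatures}$, $(\covcenter^{-1}\otimes\covcenter^{-1})\vvec(\covcenter)=\vvec(\covcenter^{-1})$, and $\vvec(\covcenter^{-1})^\top\vvec(\covcenter)=\nfeatures$ collapse the first term to $\tfrac12(\eye_{\nfeatures^2}+\MAT{K}_{\nfeatures\nfeatures})$ and leave a coefficient $\alpha\sigma_{2,\infty}+\beta/\alpha+\nfeatures\beta\sigma_{2,\infty}=\sigma_{2,\infty}(\alpha+\nfeatures\beta)+\beta/\alpha$ in front of $\vvec(\covcenter^{-1})\vvec(\covcenter)^\top$. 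With $\sigma_{2,\infty}=-\beta/(\alpha(\alpha+\nfeatures\beta))$ from Proposition~\ref{prop:asymp_norm}, this coefficient vanishes identically. Since $\tfrac12(\eye_{\nfeatures^2}+\MAT{K}_{\nfeatures\nfeatures})$ is the orthogonal projector of trace $\nfeatures(\nfeatures+1)/2$ onto vectorized symmetric matrices, the classical quadratic-form result yields $\VEC{Y}^\top\MAT{F}_{\covcenter}\VEC{Y}\sim\chi^2(\nfeatures(\nfeatures+1)/2)$, which proves the first claim.

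For the expectation statement, convergence in distribution does not automatically transfer to expectations, so uniform integrability of $\{\nmatrices\delta^2(\estimator_{\nmatrices},\covcenter)\}$ is required. This can be obtained by combining the almost-sure convergence of $\estimator_{\nmatrices}$ to $\covcenter$ (Proposition~\ref{prop:consistency}) with the variance bound of Proposition~\ref{prop:mle_expec_var}, which implies that $\mathds{E}[\nmatrices\|\tangent\|^2]$ stays bounded. On the event where $\estimator_{\nmatrices}$ lies in a compact neighborhood of $\covcenter$ in $\SPDman$, the second-order expansion shows that $\delta^2(\estimator_{\nmatrices},\covcenter)$ is uniformly comparable to $\|\tangent\|^2$, yielding the desired integrability and hence $\nmatrices\mathds{E}[\delta^2(\estimator_{\nmatrices},\covcenter)]\to\nfeatures(\nfeatures+1)/2$.

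The main obstacle is the algebraic verification that $\MAT{F}_{\covcenter}\MAT{\Delta}$ simplifies exactly to $\tfrac12(\eye_{\nfeatures^2}+\MAT{K}_{\nfeatures\nfeatures})$. Conceptually this identity is the statement that $\MAT{\Delta}$ is the Moore-Penrose pseudoinverse of $\MAT{F}_{\covcenter}$ on the vectorized symmetric subspace, and the cancellation of the cross-terms is very sensitive to the precise value of $\sigma_{2,\infty}$ dictated by Proposition~\ref{prop:asymp_norm}. Once that identity is secured, the remainder of the argument is essentially classical: Taylor expansion of $\delta^2$, continuous mapping, and a Cochran-type theorem on quadratic forms in Gaussians, with the uniform integrability step providing a minor but necessary finishing touch for the expectation statement.
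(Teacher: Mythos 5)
Your argument for the first claim is the standard delta-method route and is almost certainly the one the paper follows: expand $\delta^2$ to second order so that it agrees with the Fisher quadratic form $\vvec(\tangent)^\top\MAT{F}_{\covcenter}\vvec(\tangent)$, invoke Proposition~\ref{prop:asymp_norm} and the continuous mapping theorem, and then check that $\MAT{F}_{\covcenter}\MAT{\Delta}=\tfrac12(\MAT{I}_{\nfeatures^2}+\MAT{K}_{\nfeatures\nfeatures})$ is an idempotent of trace $\nfeatures(\nfeatures+1)/2$ so that the limiting quadratic form is $\chi^2$. I verified your Kronecker algebra: the coefficient of $\vvec(\covcenter^{-1})\vvec(\covcenter)^\top$ is indeed $\sigma_{2,\infty}(\alpha+\nfeatures\beta)+\beta/\alpha$, which vanishes for the stated $\sigma_{2,\infty}$. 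This part is correct and matches the intended mechanism exactly.

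The gap is in the expectation claim. You correctly identify that uniform integrability of $\{\nmatrices\,\delta^2(\estimator_{\nmatrices},\covcenter)\}$ is needed, but the justification you offer does not deliver it. First, Proposition~\ref{prop:mle_expec_var} asserts the \emph{existence} of $\mu,\sigma_1,\sigma_2$ for each fixed $\nmatrices$ without quantifying their dependence on $\nmatrices$, so it does not by itself show that $\nmatrices\,\mathds{E}[\|\estimator_{\nmatrices}-\covcenter\|^2]$ stays bounded; and even if it did, a bounded \emph{first} moment of $\nmatrices\|\tangent\|^2$ is not uniform integrability --- you would need a uniformly bounded moment of order $1+\epsilon$, i.e.\ control of the $(2+\epsilon)$-th moments of $\sqrt{\nmatrices}\,\vvec(\tangent)$. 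Second, the comparison $\delta^2\asymp\|\tangent\|^2$ only holds on a compact neighborhood of $\covcenter$ bounded away from the boundary of $\SPDman$; on the complementary event $\delta^2$ can be arbitrarily large even when $\|\tangent\|$ is moderate (the Fisher distance blows up as $\estimator_{\nmatrices}$ approaches a singular matrix), so almost-sure consistency makes that event asymptotically negligible in probability but not automatically negligible in expectation. To close this you need either a quantitative tail/moment bound on $\sqrt{\nmatrices}(\estimator_{\nmatrices}-\covcenter)$ (e.g.\ from the structure of the fixed-point equation~\eqref{eq:fixed_point} under Assumption~\ref{assump:deriv_psi_bound}) or a truncation argument showing the contribution of the bad event to $\nmatrices\,\mathds{E}[\delta^2]$ vanishes. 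As written, the ``consequently'' step is asserted rather than proved.
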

\begin{proof}
    See supplementary materials.
\end{proof}

\section{Statistical learning with Elliptical Wishart distributions}
\label{sec:learning}

This section aims to provide practical learning applications for Elliptical Wishart distributions.
The maximum likelihood estimator of Elliptical Wishart distributions actually corresponds to a barycenter of a set of covariance matrices.
Barycenters are widely used in machine learning, in particular in the context of classification and clustering.
When dealing with covariance matrices, they have been for instance exploited in the nearest centroid classifier and in the $K$-means clustering algorithm; see, \textit{e.g.},~\cite{tuzel2008pedestrian,barachant2011multiclass,bouchard2024random}.
Hence, classification and clustering appear as natural applications of Elliptical Wishart distributions.
In Section~\ref{subsec:learning:classification}, a novel Bayesian classification method adapted to Elliptical Wishart distributions is derived.
It is obtained by leveraging the theory of discriminant analysis.
Notice that it generalizes the classifier of~\cite{ayadi2023t} from the $t$-Wishart distribution to all Elliptical Wishart distributions.
Then, in Section~\ref{subsec:learning:clustering}, a new clustering algorithm is designed by exploiting the proposed discriminant analysis algorithm of Section~\ref{subsec:learning:classification} and the $K$-means clustering algorithm.

\subsection{Discriminant analysis with Elliptical Wishart distributions}
\label{subsec:learning:classification}

Given $\nclasses$ classes, consider the training dataset $\mathcal{D}=\{(\cov_k,y_k)\}_{k=1}^\nmatrices$, where $\cov_k\in\SPDman$ and $y_k\in\llbracket1,\nclasses\rrbracket$ corresponds to the class label.
From $\mathcal{D}$, we aim at designing a classifier that, given an unlabeled sample $\cov\in\SPDman$, provides a class label $y\in\llbracket1,\nclasses\rrbracket$.
To obtain such classifier, we are inspired by linear and quadratic discriminant analysis.
Indeed, in the present work, we generalize discriminant analysis to the case of Elliptical Wishart random matrices.
To achieve this, for each class $z$, the first step is to compute the corresponding maximum likelihood estimator of the chosen Elliptical Wishart distribution, \textit{i.e.}, $\estimator^{(z)}$, the maximum likelihood estimator of the set $\{\cov_k\in\mathcal{D}:\, y_k=z\}_{k=1}^\nmatrices$.
The membership of an unlabeled $\cov\in\SPDman$ to a given class is then modeled by assuming that
\begin{equation}
    \cov | (y=z) \sim \EWishart{\covcenter^{(z)}}.
    \label{eq:assump_ewda}
\end{equation}
It yields the decision rule provided in Proposition~\ref{prop:ewda}.

\begin{proposition}[$\mathcal{EW}$DA's decision rule]
    \label{prop:ewda}
    The decision rule of the Elliptical Wishart discriminant analysis classifier, denoted $\mathcal{EW}$DA, is, for $\cov\in\SPDman$,
    \begin{equation*}
        y = \argmax_{z\in\{1,\dots,\nclasses\}} \quad \{ \delta^{\mathcal{EW}}_{(z)}(\cov) \}_{z=1}^\nclasses,
    \end{equation*}
    with the discriminant function
    \begin{equation*}
        \delta^{\mathcal{EW}}_{(z)}(\cov) = \log(\hat{\pi}^{(z)}) - \frac{\nsamples}{2}\log\det(\estimator^{(z)}) - \log \densitygenerator(\tr(\estimator^{(z)\,-1}\cov)),
    \end{equation*}
    where $\hat{\pi}^{(z)}$ is the proportion of the class $z$ in the training dataset $\mathcal{D}$.
\end{proposition}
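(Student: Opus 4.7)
The plan is to derive the decision rule as the standard maximum a posteriori (MAP) rule under the generative model~\eqref{eq:assump_ewda}. Starting from $y = \argmax_z P(y=z \mid \cov)$, Bayes' rule rewrites this as
\[
y = \argmax_{z \in \{1,\dots,\nclasses\}} \; P(y=z) \, f(\cov \mid \covcenter^{(z)}),
\]
where the prior $P(y=z)$ is estimated from the training set $\mathcal{D}$ by the empirical class proportion $\hat{\pi}^{(z)}$, and the class-conditional density is the Elliptical Wishart pdf of Definition~\ref{def:EW} with the unknown center $\covcenter^{(z)}$ replaced by its maximum likelihood estimate $\estimator^{(z)}$, computed over $\{\cov_k \in \mathcal{D}: y_k = z\}$ via the algorithms of Section~\ref{sec:mle}.

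The next step is to take the logarithm of the quantity to be maximized and substitute the explicit pdf from Definition~\ref{def:EW}. This yields a sum of summands: the log-prior $\log \hat{\pi}^{(z)}$; the normalization constant $\log(\pi^{\nsamples\nfeatures/2}/\Gamma_{\nfeatures}(\nsamples/2))$; the log-determinant factor $-\frac{\nsamples}{2}\log\det(\estimator^{(z)})$; the contribution $\frac{\nsamples-\nfeatures-1}{2}\log\det(\cov)$ from the prefactor on $\cov$; and the density-generator term involving $\densitygenerator(\tr(\estimator^{(z)\,-1}\cov))$. Among these, the normalization constant and the $\log\det(\cov)$ term are independent of the class index $z$ and can be dropped without altering the argmax.

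What remains is exactly the combination of terms composing the discriminant function $\delta^{\mathcal{EW}}_{(z)}(\cov)$ stated in the proposition, up to the sign convention tying $\log\densitygenerator$ to the negative log-likelihood used in~\eqref{eq:log_lik}. There is no conceptual obstacle here: the argument is pure bookkeeping, and the only care required is to correctly identify which factors of the Elliptical Wishart pdf depend on $z$ through $\estimator^{(z)}$ (namely $\det(\estimator^{(z)})$ and the argument of $\densitygenerator$) and which do not (the multivariate gamma normalization and $\det(\cov)$). The appeal to the uniqueness of $\estimator^{(z)}$ for each class, inherited from Proposition~\ref{prop:exist_unique}, is what makes the discriminant function well-defined.
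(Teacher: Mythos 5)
Your approach is the right one and is the standard (and surely the paper's) derivation: MAP rule, Bayes' theorem, plug-in of $\hat{\pi}^{(z)}$ and $\estimator^{(z)}$, take logarithms, and discard the terms that do not depend on $z$ (the multivariate Gamma normalization and the $\det(\cov)^{(\nsamples-\nfeatures-1)/2}$ factor). That bookkeeping is correct and complete as far as it goes.

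The one place you do not actually close the argument is the final sentence, where you wave at ``the sign convention tying $\log\densitygenerator$ to the negative log-likelihood.'' There is no such convention to invoke: your derivation produces the term $+\log\densitygenerator(\tr(\estimator^{(z)\,-1}\cov))$, whereas the proposition as printed has $-\log\densitygenerator(\cdot)$ while keeping an $\argmax$ and the signs $+\log(\hat{\pi}^{(z)})$ and $-\frac{\nsamples}{2}\log\det(\estimator^{(z)})$ on the other two terms. These cannot all be simultaneously consistent. You should resolve this explicitly rather than defer it: note that the Wishart and $t$-Wishart specializations given immediately after the proposition (with last terms $-\frac12\tr(\estimator^{(z)\,-1}\cov)$ and $-\frac{\dof+\nsamples\nfeatures}{2}\log(1+\frac1\dof\tr(\estimator^{(z)\,-1}\cov))$) both equal $\log\densitygenerator$ evaluated at $\tr(\estimator^{(z)\,-1}\cov)$, not $-\log\densitygenerator$, so your $+\log\densitygenerator$ is the correct outcome and the minus sign in the displayed general formula is a typo in the statement. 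As written, ``up to the sign convention'' reads as though a sign remains undetermined in your derivation, which would be a genuine defect; in fact the derivation fixes the sign unambiguously and it is the statement that must be amended. One further small point: the appeal to Proposition~\ref{prop:exist_unique} for well-definedness implicitly requires Assumptions~\ref{assump:exist_unique_1}--\ref{assump:exist_unique_3} to hold for the chosen density generator; it is worth saying so.
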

\begin{proof}
    See supplementary materials.
\end{proof}

\begin{algorithm}[t]
    \caption{Elliptical Wishart distribution discriminant analysis $\mathcal{EW}$DA}
    \label{algo:ewda}
    \begin{algorithmic}
        \STATE \textbf{Training step}
        \vspace*{3pt}
        \hrule
        \vspace*{3pt}
        \STATE \textbf{Input:}
        degrees of freedom $n$,
        functions $\densitygenerator(\cdot)$ and $u(\cdot)$,
        training dataset $\mathcal{D}=\{(\cov_k,y_k)\}_{k=1}^\nmatrices$ in $\SPDman\times\llbracket1,\nclasses\rrbracket$.
        \\[5pt]
        \FOR{$z\in\llbracket1,\nclasses\rrbracket$}
            \STATE Compute MLE $\estimator^{(z)}$ of $\{\cov_k\in\mathcal{D}:\, y_k=z\}_{k=1}^\nmatrices$ with Algorithm~\ref{algo:fixed_point} or~\ref{algo:riem}
            \STATE Compute the proportion of class $z$ in $\mathcal{D}$: $\hat{\pi}^{(z)}=\frac1\nmatrices\sum_{k=1}^\nmatrices (y_k==z)$
        \ENDFOR
        \STATE \textbf{Return:} $\{ \estimator^{(z)}, \hat{\pi}^{(z)} \}_{z=1}^\nclasses$
        \vspace*{5pt}
        \hrule
        \vspace*{3pt}
        \STATE \textbf{Testing step}
        \vspace*{3pt}
        \hrule
        \vspace*{3pt}
        \STATE \textbf{Input:} unlabeled data $\cov\in\SPDman$.
        \FOR{$z\in\llbracket1,\nclasses\rrbracket$}
            \STATE Compute discriminant function $\delta^{\mathcal{EW}}_{(z)}(\cov)$ with Proposition~\ref{prop:ewda}
        \ENDFOR
        \STATE \textbf{Return:} label $y\in\llbracket1,\nclasses\rrbracket$ computed with Proposition~\ref{prop:ewda}
    \end{algorithmic}
\end{algorithm}

The Elliptical Wishart distribution discriminant analysis classifier $\mathcal{EW}$DA is summarized in Algorithm~\ref{algo:ewda}.
Concerning the Wishart distribution, it yields the so-called $\mathcal{W}$DA classifier.
Its discriminant function is given by
\begin{equation}
    \delta^{\mathcal{W}}_{(z)}(\cov) = \log(\hat{\pi}^{(z)}) - \frac{\nsamples}2\log\det(\estimator^{(z)}) - \frac12\tr(\estimator^{(z)\,-1}\cov).
\end{equation}
Furthermore, the $t$-Wishart distribution yields the so-called $t$-$\mathcal{W}$DA classifier.
Its discriminant function is
\begin{equation}
    \delta^{t\textup{-}\mathcal{W}}_{(z)}(\cov) = \log(\hat{\pi}^{(z)}) - \frac{\nsamples}2\log\det(\estimator^{(z)}) - \frac{\dof+\nsamples\nfeatures}2\log(1+\frac1\dof\tr(\estimator^{(z)\,-1}\cov)).
\end{equation}

\subsection{A $K$-means like clustering algorithm}
\label{subsec:learning:clustering}

The Elliptical Wishart discriminant analysis classifier $\mathcal{EW}$DA can be adapted to an unsupervised clustering scenario.
It is obtained by leveraging the $K$-means clustering method, which can be viewed as the adaptation of the nearest centroid classifier to the unsupervised clustering scenario.
Our resulting method is referred to as the Elliptical Wishart $K$-means clustering method.
In this setting, one has a set of $\nmatrices$ matrices $\{\cov_k\}_{k=1}^\nmatrices$ in $\SPDman$.
Given a certain number of classes $\nclasses$, the goal of clustering is to assign a label $y_k\in\llbracket1,\nmatrices\rrbracket$ to each $\cov_k$.
This is achieved with an iterative algorithm.
Each iteration $\ell$ consists of two steps.
The first one is the so-called assignment procedure, where, given $\nclasses$ barycenters $\{\estimator^{(z)}_{\ell}\}_{z=1}^\nclasses$, a label $y_{k,\ell}$ is assigned to each $\cov_k$ by exploiting the decision rule from Proposition~\ref{prop:ewda}.
The second step is a barycenter update one, where every maximum likelihood estimator $\estimator^{(z)}_{\ell}$ is recomputed from the dataset $\{\cov_k:\,y_{k,\ell}=z\}$.
This is repeated until some equilibrium is reached.
It is very well known that this type of clustering algorithms is very sensitive to the initialization of barycenters.
As proposed in~\cite{arthur2007k}, we consider employing several initializations and keep the results from the one maximizing the so-called inertia criterion
\begin{equation}
    \mathcal{I}(\{\cov_k,y_k\},\{\estimator^{(z)}\}_{z=1}^\nclasses) = \sum_{k=1}^\nmatrices \delta^{\mathcal{EW}}_{(y_k)}(\cov_k),
    \label{eq:clustering_inertia}
\end{equation}
where $\delta^{\mathcal{EW}}_{(z)}(\cov)$ is the discriminant function from Proposition~\ref{prop:ewda}.
The proposed clustering method is summarized in Algorithm~\ref{algo:ew_clustering}.

\begin{algorithm}
    \caption{Elliptical Wishart $K$-means clustering}
    \label{algo:ew_clustering}
    \begin{algorithmic}
        \STATE \textbf{Input:}
        degrees of freedom $n$,
        functions $\densitygenerator(\cdot)$ and $u(\cdot)$,
        dataset $\{\cov_k\}_{k=1}^\nmatrices$,
        number of classes $\nclasses$,
        tolerance $\alpha>0$,
        maximum iterations $\ell_{\textup{max}}$,
        number of initializations $M$.
        \\[5pt]
        \FOR{$m$ \textbf{in} $\llbracket1,M\rrbracket$}
            \STATE Randomly choose $\{k_z\}_{z=1}^\nclasses$ in $\llbracket1,\nmatrices\rrbracket$ and set $\estimator^{(z)}_{0}=\cov_{k_z}$
            \STATE Compute $\{y_{k,0}\}_{k=1}^\nmatrices$ with the decision rule in Proposition~\ref{prop:ewda}
            \STATE $\ell=0$
            \REPEAT
                \STATE $\ell=\ell+1$
                \STATE Compute $\{\estimator^{(z)}_{\ell}\}_{z=1}^\nclasses$ from $\{\cov_k:\,y_{k,\ell-1}=z\}$ with Algorithm~\ref{algo:fixed_point} or~\ref{algo:riem}
                \STATE Compute $\{y_{k,\ell}\}_{k=1}^\nmatrices$ with the decision rule in Proposition~\ref{prop:ewda}
            \UNTIL{$\frac1\nmatrices\sum_{k=1}^\nmatrices \| y_{k,\ell+1} - y_{k,\ell} \|_2<\alpha$ \textbf{or} $\ell>\ell_{\textup{max}}$}
            \STATE compute inertia $\mathcal{I}(m)$ for initialization $m$ with~\eqref{eq:clustering_inertia}
        \ENDFOR
        \STATE Compute $m_{\textup{max}} = \argmax_{m\in\llbracket1,M\rrbracket} \quad \{ \mathcal{I}(m) \}_{m=1}^M$
        \STATE \textbf{Return:} $\{y_k\}_{k=1}^\nmatrices$ associated with initialization $m_{\textup{max}}$
    \end{algorithmic}
\end{algorithm}

\section{Numerical experiments}
In this section, numerical experiments are conducted in order to investigate the interest of Elliptical Wishart distributions in practice.
In these experiments, we focus on the Wishart and $t$-Wishart distributions.
When computing the maximum likelihood of the $t$-Wishart distribution with the Riemannian algorithm (Algorithm~\ref{algo:riem}), we employ the Riemannian conjugate gradient implemented with pymanopt~\cite{boumal2014manopt,townsend2016pymanopt}.
First, in Section~\ref{subsec:sim_data}, we study the performance of the maximum likelihood estimator computed with Algorithms~\ref{algo:fixed_point} and~\ref{algo:riem} on simulated data.
More specifically, the two different algorithms are compared and the interest of the $t$-Wishart maximum likelihood estimator is evaluated as compared to the Wishart maximum likelihood estimator~\eqref{eq:mle_Wishart}.
Finally, in Section~\ref{subsec:real_data}, we explore the performance of our proposed Elliptical Wishart based statistical learning methods on real data.
Our classification method from Algorithm~\ref{algo:ewda} is employed on electroencephalographic (EEG) data.
Then, the clustering method provided in Algorithm~\ref{algo:ew_clustering} is applied to hyperspectral data.

\subsection{Simulated data}
\label{subsec:sim_data}

To evaluate the performance of the maximum likelihood estimator in practice, the first thing is to generate synthetic data.
We start by generating a center $\covcenter\in\SPDman$.
To do so, we set $\covcenter=\MAT{V}\MAT{\Lambda}\MAT{V}^\top$, where $\MAT{V}\in\mathds{R}^{\nfeatures\times\nfeatures}$ is uniformly distributed on the orthogonal group and $\MAT{\Lambda}\in\mathds{R}^{\nfeatures\times\nfeatures}$ is a positive definite diagonal matrix such that: the smallest and largest diagonal values are $1/\sqrt{c}$ and $\sqrt{c}$ ($c=10$ is the condition number with respect to inversion of $\MAT{\Lambda}$), and its other elements are uniformly distributed in between. 
Then, random sets $\{\cov_k\}_{k=1}^\nmatrices$ are drawn from the $t$-Wishart distribution $\tWishart{\covcenter}$.
These are obtained thanks to the procedure developed in~\cite{ayadi2024elliptical}.
For each combination of considered parameters ($\nfeatures$, $\nsamples$, $\nmatrices$, $\dof$), $200$ Monte Carlo repetitions are performed.
Furthermore, to measure the error of an estimator $\estimator$ of the true parameter $\covcenter$, we leverage the Fisher distance~\eqref{eq:dist}, \textit{i.e.}, $\mathrm{err}(\estimator)=\delta^2(\estimator,\covcenter)$.

\begin{figure}
    \centering
    \setlength\height{4.5cm} 
\setlength\width{0.53\linewidth}

\begin{tikzpicture}

\begin{axis}[
    width  =\width,
    height =\height,
    at     ={(0,0)},
    xlabel = {iterations},
    ylabel = {$\delta^2(\estimator,\covcenter)$~(dB)},
    title = {\small$\nsamples=100$},
    legend style={legend cell align=left,align=left,draw=none,fill=none,font=\footnotesize,legend columns=2,transpose legend}
    ]

    \foreach \x in {1,...,20}
    {
        \ifnum\x=1
            \addplot[color=myblue,line width=0.5pt] table [x=iter,y=error_\x,col sep=comma] {./figures/FP_n100_p10_df10_K300_MC20_cond10_iter5.txt};
            \addlegendentry{FP};
            \addplot[color=myorange,line width=0.5pt] table [x=iter,y=error_\x,col sep=comma] {./figures/RCG_n100_p10_df10_K300_MC20_cond10.txt};
            \addlegendentry{RCG};
        \else
            \addplot[forget plot,color=myblue,line width=0.5pt] table [x=iter,y=error_\x,col sep=comma] {./figures/FP_n100_p10_df10_K300_MC20_cond10_iter5.txt};
            \addplot[forget plot,color=myorange,line width=0.5pt] table [x=iter,y=error_\x,col sep=comma] {./figures/RCG_n100_p10_df10_K300_MC20_cond10.txt};
        \fi
    }
\end{axis}

\begin{axis}[
    width  =\width,
    height =\height,
    at     ={(0.83\width,0)},
    xlabel = {iterations},
    yticklabels  = {\empty},
    title = {\small$\nsamples=1000$},
    ]

    \foreach \x in {1,...,20}
    {
        \addplot[forget plot,color=myblue,line width=0.5pt] table [x=iter,y=error_\x,col sep=comma,skip coords between index={600}{5720}] {./figures/FP_n1000_p10_df10_K300_MC20_cond10_iter50.txt};
        \addplot[forget plot,color=myorange,line width=0.5pt] table [x=iter,y=error_\x,col sep=comma] {./figures/RCG_n1000_p10_df10_K300_MC20_cond10.txt};
    }
\end{axis}

\begin{axis}[
    width  =\width,
    height =\height,
    at     ={(0,-\height)},
    xlabel = {time (s)},
    ylabel = {$\delta^2(\estimator,\covcenter)$~(dB)},
    ]
    
    \foreach \x in {1,...,20}
    {
        \addplot[forget plot,color=myblue,line width=0.5pt] table [x=time_\x,y=error_\x,col sep=comma] {./figures/FP_n100_p10_df10_K300_MC20_cond10_iter5.txt};
        \addplot[forget plot,color=myorange,line width=0.5pt] table [x=time_\x,y=error_\x,col sep=comma] {./figures/RCG_n100_p10_df10_K300_MC20_cond10.txt};
    }
\end{axis}

\begin{axis}[
    width  =\width,
    height =\height,
    at     ={(0.83\width,-\height)},
    xlabel = {time (s)},
    yticklabels  = {\empty},
    ]

    \foreach \x in {1,...,20}
    {
        \addplot[forget plot,color=myblue,line width=0.5pt] table [x=time_\x,y=error_\x,col sep=comma,skip coords between index={600}{5720}] {./figures/FP_n1000_p10_df10_K300_MC20_cond10_iter50.txt};
        \addplot[forget plot,color=myorange,line width=0.5pt] table [x=time_\x,y=error_\x,col sep=comma] {./figures/RCG_n1000_p10_df10_K300_MC20_cond10.txt};
    }
\end{axis}

\end{tikzpicture}
    \vspace*{-40pt}
    \caption{Comparison over 20 runs between the fixed-point algorithm presented in Algorithm~\ref{algo:fixed_point} (FP) and the Riemannian optimization based algorithm provided in Algorithm~\ref{algo:riem} (RCG).
    For the Riemannian algorithm, a Riemannian conjugate gradient method is leveraged.
    The estimation error $\delta^2(\estimator,\covcenter)$ is plotted as a function of \textit{(i)} the number of iterations and \textit{(ii)} time.
    Fixed parameters are $\nfeatures=10$, $\nmatrices=300$ and $\dof=10$.}
    \label{fig:FP_vs_Riem}
\end{figure}
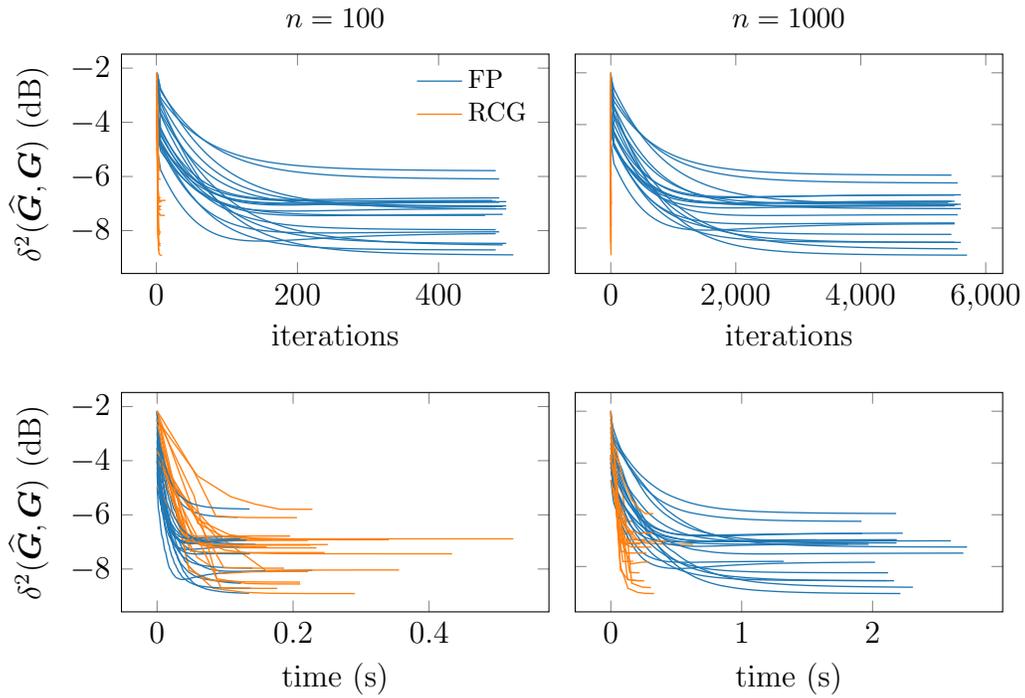

Our first task consists in comparing the fixed-point algorithm presented in Algorithm~\ref{algo:fixed_point} with the Riemannian optimization based algorithm provided in Algorithm~\ref{algo:riem}.
To do so, we observe the evolution of the error measure as a function of \textit{(i)} the number of iterations and \textit{(ii)} computation time.
Results are presented in Figure~\ref{fig:FP_vs_Riem}.
We observe that, in terms of iterations, in all considered cases, Algorithm~\ref{algo:riem}, denoted RCG, is drastically better than Algorithm~\ref{algo:fixed_point}, denoted FP.
Only a few iterations are required for RCG (around 10) while many are needed for FP (hundreds for $\nsamples=100$, thousands for $\nsamples=1000$).
As $\nsamples$ increases, FP takes more iterations to converge, while the number of iterations for RCG remains steady.
In terms of computation time, an iteration of FP appears significantly less costly than an iteration of RCG.
Consequently, when $\nsamples=100$, FP is slightly faster than RCG.
For $\nsamples=1000$, FP becomes quite slower than RCG.
Notice that while the implementation of FP is custom and probably close to be optimal, the one of RCG relies on a generic toolbox.
Hence, one can expect that it can be significantly improved in terms of computation time.
In conclusion, in the case of the $t$-Wishart distribution with such simulated data, RCG appears advantageous as compared to FP.
Interestingly, in the multivariate case, the fixed-point algorithm is usually the best solution, better than Riemannian optimization.

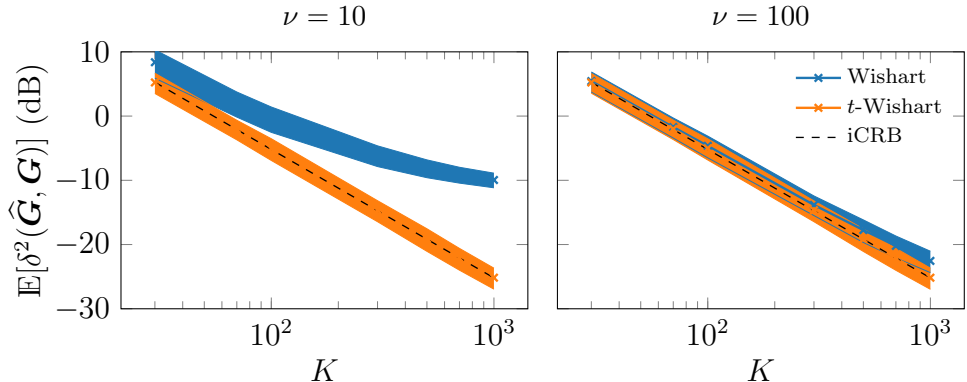
\begin{figure}
    \centering
    \setlength\height{5cm} 
\setlength\width{0.51\linewidth}

\begin{tikzpicture}

\begin{semilogxaxis}[
    width  =\width,
    height =\height,
    at     ={(0,0)},
    xlabel = {$\nmatrices$},
    ylabel = {$\mathds{E}[\delta^2(\estimator,\covcenter)]$~(dB)},
    ymax=10,
    ymin=-30,
    title = {\small$\dof=10$},
]

    \addplot[color=myblue,opacity=0.2,line width=0.1pt, forget plot, name path=A] table [x=k,y expr=20*log10(\thisrow{wishart_err_mean}+\thisrow{wishart_err_std}),col sep=comma] {./figures/estimation_n100_p10_df10_MC200_cond10.txt};
    \addplot[color=myblue,opacity=0.2,line width=0.1pt, forget plot, name path=B] table [x=k,y expr=20*log10(\thisrow{wishart_err_mean}-\thisrow{wishart_err_std}),col sep=comma] {./figures/estimation_n100_p10_df10_MC200_cond10.txt};
    \addplot[color=myblue,opacity=0.2, forget plot] fill between[of=A and B];

    \addplot[color=myorange,opacity=0.2,line width=0.1pt, forget plot, name path=A] table [x=k,y expr=20*log10(\thisrow{twishart_err_mean}+\thisrow{twishart_err_std}),col sep=comma] {./figures/estimation_n100_p10_df10_MC200_cond10.txt};
    \addplot[color=myorange,opacity=0.2,line width=0.1pt, forget plot, name path=B] table [x=k,y expr=20*log10(\thisrow{twishart_err_mean}-\thisrow{twishart_err_std}),col sep=comma] {./figures/estimation_n100_p10_df10_MC200_cond10.txt};
    \addplot[color=myorange,opacity=0.2, forget plot] fill between[of=A and B];

    \addplot[color=myblue,line width=0.9pt,mark=x,mark size=2pt] table [x=k,y expr=20*log10(\thisrow{wishart_err_mean}),col sep=comma] {./figures/estimation_n100_p10_df10_MC200_cond10.txt};

    \addplot[color=myorange,line width=0.9pt,mark=x,mark size=2pt] table [x=k,y expr=20*log10(\thisrow{twishart_err_mean}),col sep=comma] {./figures/estimation_n100_p10_df10_MC200_cond10.txt};

    \addplot[color=black,dashed,line width=0.5pt] table [x=k,y expr=20*log10(\thisrow{crb}),col sep=comma] {./figures/estimation_n100_p10_df10_MC200_cond10.txt};

\end{semilogxaxis}

\begin{semilogxaxis}[
    width  =\width,
    height =\height,
    at     ={(0.83\width,0)},
    xlabel = {$\nmatrices$},
    %
    yticklabel = {\empty},
    ymax=10,
    ymin=-30,
    title = {\small$\dof=100$},
    legend style={legend cell align=left,align=left,draw=none,fill=none,font=\scriptsize,legend columns=1,transpose legend}
]

    \addplot[color=myblue,opacity=0.2,line width=0.1pt, forget plot, name path=A] table [x=k,y expr=20*log10(\thisrow{wishart_err_mean}+\thisrow{wishart_err_std}),col sep=comma] {./figures/estimation_n100_p10_df100_MC200_cond10.txt};
    \addplot[color=myblue,opacity=0.2,line width=0.1pt, forget plot, name path=B] table [x=k,y expr=20*log10(\thisrow{wishart_err_mean}-\thisrow{wishart_err_std}),col sep=comma] {./figures/estimation_n100_p10_df100_MC200_cond10.txt};
    \addplot[color=myblue,opacity=0.2, forget plot] fill between[of=A and B];

    \addplot[color=myorange,opacity=0.2,line width=0.1pt, forget plot, name path=A] table [x=k,y expr=20*log10(\thisrow{twishart_err_mean}+\thisrow{twishart_err_std}),col sep=comma] {./figures/estimation_n100_p10_df100_MC200_cond10.txt};
    \addplot[color=myorange,opacity=0.2,line width=0.1pt, forget plot, name path=B] table [x=k,y expr=20*log10(\thisrow{twishart_err_mean}-\thisrow{twishart_err_std}),col sep=comma] {./figures/estimation_n100_p10_df100_MC200_cond10.txt};
    \addplot[color=myorange,opacity=0.2, forget plot] fill between[of=A and B];
    
    \addplot[color=myblue,line width=0.9pt,mark=x,mark size=2pt] table [x=k,y expr=20*log10(\thisrow{wishart_err_mean}),col sep=comma] {./figures/estimation_n100_p10_df100_MC200_cond10.txt};
    \addlegendentry{Wishart};

    \addplot[color=myorange,line width=0.9pt,mark=x,mark size=2pt] table [x=k,y expr=20*log10(\thisrow{twishart_err_mean}),col sep=comma] {./figures/estimation_n100_p10_df100_MC200_cond10.txt};
    \addlegendentry{$t$-Wishart};

    \addplot[color=black,dashed,line width=0.5pt] table [x=k,y expr=20*log10(\thisrow{crb}),col sep=comma] {./figures/estimation_n100_p10_df100_MC200_cond10.txt};
    \addlegendentry{iCRB};

\end{semilogxaxis}

\end{tikzpicture}
    \vspace*{-40pt}
    \caption{Mean and standard deviation of error measure $\delta^2(\estimator,\covcenter)$ as a function of the number of matrices $\nmatrices$ for the Wishart maximum likelihood estimator~\eqref{eq:mle_Wishart} and the $t$-Wishart maximum likelihood estimator computed with Algorithm~\ref{algo:riem} (Riemannian conjugate gradient).
    The intrinsic Cramér-Rao bound~\eqref{eq:icrb} is also displayed.
    Fixed parameters are $\nfeatures=10$ and $\nsamples=100$.
    Degrees of freedom for simulated $t$-Wishart random matrices $\{\cov_k\}_{k=1}^\nmatrices$ are $\dof=10$ (left) and $\dof=100$ (right).
    In both cases, the $t$-Wishart maximum likelihood estimator is computed with the correct value of $\dof$.
    Means and standard deviations are computed over $200$ Monte Carlo repetitions.}
    \label{fig:perf_vs_samples}
\end{figure}

Secondly, we investigate the practical interest of the $t$-Wishart maximum likelihood estimator as compared to the simpler Wishart maximum likelihood estimator~\eqref{eq:mle_Wishart}.
To do so, we look at the performance of both estimators on simulated data for various values of degrees of freedom $\dof$ of the $t$-distribution, and various number of random matrices $\nmatrices$.
Obtained results are displayed in Figure~\ref{fig:perf_vs_samples}.
We observe that for $\dof=10$, when random matrices are far from the Wishart distribution, the accuracy of the Wishart maximum likelihood estimator is far from the one of the $t$-Wishart maximum likelihood estimator, especially as $\nmatrices$ grows.
When $\dof=100$, random matrices get closer to the Wishart distribution and the performance of the Wishart maximum likelihood estimator get closer to the one of the $t$-Wishart maximum likelihood estimator.
It is still a little inferior, especially for large values of $\nmatrices$.
In summary, as expected%
\footnote{
    Since it perfectly corresponds to the simulated data.
},
the $t$-Wishart maximum likelihood estimator is the best in both cases.
Furthermore, the $t$-Wishart maximum likelihood estimator actually reaches the intrinsic Cramér-Rao bound~\eqref{eq:icrb}.
This tends to show that Proposition~\ref{prop:fisher_efficient} of intrinsic Fisher efficiency is verified in practice.

In conclusion, these experiments on simulated data validate the actual performance of the $t$-Wishart maximum likelihood estimator computed with Algorithms~\ref{algo:fixed_point} and~\ref{algo:riem}.
This estimator can now be exploited on real data statistical learning applications.

\subsection{Statistical learning on real data}
\label{subsec:real_data}

The first real data statistical learning experiment that is considered consists in classifying electroencephalographic (EEG) data.
In the second one, we deal with the clustering of hyperspectral data.

\subsubsection{Classification of EEG data}
In order to perform EEG classification, we exploit MOABB~\cite{jayaram2018moabb,chevallier2024largest}, which provides numerous datasets and benchmarking tools for fair comparisons.
In this work, we consider two different brain computer interface (BCI) paradigms: motor imagery (MI) and steady states visually evoked potentials (SSVEP).
MI is a mental process where the subject mentally visualize a physical action such as moving their left or right hand, feet, tongue, \textit{etc}.
Each action corresponds to a class and the goal is to identify them from raw signals.
SSVEP signals are natural responses to repetitive visual stimuli at specific frequencies, \textit{e.g.}, blinking leds.
The visual cortex synchronizes with the stimuli and the resulting sinusoidal signal can be recorded.
In this setting, the different classes correspond to the different considered frequencies for the stimuli -- and usually also a resting state class (no stimulus).

Different datasets available on MOABB are employed.
For MI, we consider BNCI2014-001 (9 subjects, 2 sessions, $\nfeatures=16$, $\nsamples=1000$, $\nmatrices=288$, $\nclasses=4$); BNCI2014-004 (9 subjects, 5 sessions, $\nfeatures=3$, $\nsamples=1125$, $\nmatrices=360$, $\nclasses=2$); BNCI2015-001 (12 subjects, 2 or 3 sessions, $\nfeatures=13$, $\nsamples=1280$, $\nmatrices=200$, $\nclasses=2$); and Weibo2014 (10 subjects, 1 session, $\nfeatures=10$, $\nsamples=800$, $\nmatrices=80$, $\nclasses=7$).
For SSVEP, we use Kalunga2016 (12 subjects, 1 session, $\nfeatures=8$, $\nsamples=1280$, $\nmatrices=64$, $\nclasses=4$).
For further details on these datasets, the reader is referred to the documentation of MOABB.

Constructing the SPD samples $\{\cov_k\}_{k=1}^\nmatrices$ in $\SPDman$ from the raw EEG signals $\{\dataMat_k\}_{k=1}^\nmatrices$ in $\mathds{R}^{\nfeatures_{\dataMat}\times\nsamples}$ depends on the paradigm.
For MI, we simply take $\cov_k=\dataMat_k\dataMat_k^\top$.
For SSVEP, it is more complicated.
For each stimulus frequency $f_z$, the trial $\dataMat_k$ is filtered with a bandpass filter around the frequency of interest $f_z$, yielding $\dataMat^{(f_z)}_k$.
Resulting filtered trials are then stacked, \textit{i.e.}, $\widetilde{\dataMat}_k = ([\dataMat^{(f_z)}_k]_{z=1}^{\nclasses})^\top$.
Finally, $\cov_k=\widetilde{\dataMat}_k\widetilde{\dataMat}_k^\top$.
Notice that in this case, $\nfeatures=\nclasses\nfeatures_{\dataMat}$, meaning that the SPD matrices can quickly become very big.

In addition to comparing the classification accuracies of the Wishart and $t$-Wishart discriminant analysis classifiers $\mathcal{W}$DA and $t$-$\mathcal{W}$DA derived from Algorithm~\ref{algo:ewda} in Section~\ref{subsec:learning:classification}, we consider the so-called Riemannian MDM classifier~\cite{barachant2011multiclass}.
This classifier relies on the Fisher distance of the multivariate Normal distribution, which corresponds to~\eqref{eq:dist} with $\alpha=1$ and $\beta=0$.
Each class is characterized by the Fréchet mean of corresponding training sample covariance matrices.
The decision rule consists in selecting the class whose Fréchet mean is the closest to the unknown covariance.

Obtained classification accuracies are given in Table~\ref{tab:EEG_classif}.
We observe that for every dataset, the overall classification accuracies contain quite a lot of variability.
Hence, no thorough interpretation can be taken out of these.
We can notice that our proposed classifier $t$-$\mathcal{W}$DA seems competitive with the state-of-the-art classifier MDM and even appears slightly better (especially for Weibo2014).
$\mathcal{W}$DA is also competitive but feels slightly inferior (especially for BNCI2015-001).
This tends to show the practical interest of $t$-$\mathcal{W}$DA in the context of EEG classification.

\begin{table}[t!]
    \centering
    \renewcommand{\arraystretch}{1.4}
    \small
    \begin{tabular}{clccc}
        \hline
        \normalsize paradigm & \normalsize dataset & \normalsize MDM & \normalsize $\mathcal{W}$DA & \normalsize $t$-$\mathcal{W}$DA 
        \\
        \hline
        \multirow{4}{*}{\normalsize MI}
        & BNCI2014-001 & $65.1\pm15.6$ & $63.7\pm15.5$ & $\mathbf{67.1\pm16.5}$
        \\
        & BNCI2014-004 & $77.8\pm15.7$ & $76.3\pm15.7$ & $\mathbf{78.8\pm14.8}$
        \\
        & BNCI2015-001 & $\mathbf{80.5\pm13.7}$ & $70.7\pm15.7$ & $79.4\pm14.1$
        \\
        & Weibo2014 & $34.4\pm6.4$ & $41.8\pm9.4$ & $\mathbf{42.0\pm9.0}$
        \\
        \hline
        \normalsize SSVEP & Kalunga2016 & $79.1\pm10.0$ & $81.79\pm13.44$ & $\mathbf{82.5\pm10.3}$
        \\
        \hline
    \end{tabular}
    \caption{Classification accuracies for MDM~\cite{barachant2011multiclass}, $\mathcal{W}$DA and $t$-$\mathcal{W}$DA classifiers from Algorithm~\ref{algo:ewda} over various EEG datasets.
    All datasets are available in MOABB~\cite{jayaram2018moabb,chevallier2024largest}.}
    \label{tab:EEG_classif}
\end{table}

\subsubsection{Clustering of hyperspectral data}

We now consider the clustering of hyperspectral remote sensing datasets.
In this paper, we consider the datasets Salinas, KSC and Indian Pines%
\footnote{
    These three datasets are available at \url{https://www.ehu.eus/ccwintco/index.php/Hyperspectral_Remote_Sensing_Scenes}.
}.
Each dataset is composed of one hyperspectral image of the ground somewhere on earth containing a unique number of reflectance bands.
These datasets also feature a unique number of classes and possess annotated ground truths.
Some areas are labeled as ``undefined'' and considered unreliable.
It is thus usual to exclude these from the accuracy measures.
Nevertheless, they are included in the clustering procedure to ensure a realistic evaluation.

Following previous works~\cite{collas2021probabilistic,bouchard2024random}, the preprocessing of each dataset consists in three main steps.
The first one is to normalize the data by subtracting the image global mean.
Then, principal component analysis (PCA) is applied in order to select the number of features $\nfeatures$.
For the three datasets, we set $\nfeatures=5$.
Finally, a sliding window with overlap is used around each pixel for data sampling.
This determines $\nsamples$.
In all cases, we choose $\nsamples=25$ in the present work.

As for EEG data, three different methods are considered: the Wishart and $t$-Wishart $k$-means clustering derived from Algorithm~\ref{algo:ew_clustering}, and the $K$-means on $\SPDman$ which is the counterpart of the MDM classifier.
In the three cases, for Salinas and KSC, 5 different initializations are used while 10 different initializations are exploited for Indian Pines.
To measure accuracy, the first step is to apply a linear optimization assignment algorithm in order to align the clustered images with the ground truths.
Then, two usual performance measures are used: the averaged overall accuracy (acc.), as well as the averaged mean intersection over union (mIoU).

Obtained results are reported in Table~\ref{tab:hyper_cluster}.
Again these experiments are too limited to actually draw meaningful conclusions.
We still observe that our proposed clustering method based on $t$-$\mathcal{W}$DA yields the best results for the three considered scenarios.
On the other hand, the clustering method based on $\mathcal{W}$DA performs quite poorly as compared to other considered methods on these datasets with the chosen settings.

\begin{table}[t!]
    \centering
    \renewcommand{\arraystretch}{1.4}
    \begin{tabular}{ccccccc}
         \hline
         \multirow{2}{*}{dataset} & \multicolumn{2}{c}{MDM} & \multicolumn{2}{c}{Wishart} & \multicolumn{2}{c}{$t$-Wishart}
         \\[-7pt]
         & acc. & mIoU & acc. & mIoU & acc. & mIoU
         \\
         \hline
         Salinas & 52.7 & 31.7 & 43.5 & 25.1 & \textbf{60.3} & \textbf{42.2}
         \\
         KSC & 26.3 & 16.9 & 9.7 & 12.9 & \textbf{28.0} & \textbf{19.0}
         \\
         Indian Pines & 37.6 & 26.6 & 37.3 & 23.9 & \textbf{41.3} & \textbf{26.1}
         \\
         \hline
    \end{tabular}
    \caption{Averaged overall accuracies (acc.) and averaged mean intersection over union (mIoU) for the clustering of three hyperspectral datasets with the Wishart and $t$-Wishart $k$-means clustering derived from Algorithm~\ref{algo:ew_clustering}, and the $K$-means on $\SPDman$ which is the counterpart of the MDM classifier.}
    \label{tab:hyper_cluster}
\end{table}

\bibliographystyle{elsarticle-num}
\bibliography{biblio}

\end{document}